\documentclass[10pt]{article} 
\usepackage[accepted]{tmlr}


\usepackage{graphicx}
\usepackage{url}
\usepackage{pifont}
\usepackage{amssymb}
\usepackage{algpseudocode}

\input{preamble.tex}
\usepackage[mathscr]{euscript}

\newcommand\tldrDone[1]{}

\usepackage{graphicx}
\usepackage{subcaption}




\newcommand{\R}{\mathbb{R}}
\newcommand{\N}{\mathbb{N}}

\newcommand{\eps}{\varepsilon}

\newcommand{\E}[2][]{\mathbb{E}_{\ifx &#1& \else #1 \fi}\left[#2\right]}




\newcommand{\var}{\mathsf{Var}}

\newcommand{\Ep}{\mathbb{E}}

\newcommand{\norm}[1]{\left\|{#1}\right\|} 

\newcommand{\est}[1]{\widehat{#1}}

\newcommand{\mc}[1]{\mathcal{#1}}


\newcommand{\defeq}{:=}

\definecolor{innerboxcolor}{rgb}{.9,.95,1}
\definecolor{outerlinecolor}{rgb}{.6,0,.2}
\definecolor{shcolor}{RGB}{27, 87, 14}
\definecolor{rckcolor}{RGB}{0,0,255}

\newcommand{\txt}[1]{\textup{#1}}
\newcommand{\iid}{\overset{\text{i.i.d.}}{\sim}}





\newcommand{\permtest}{\hyperref[algorithm:permtest]{\mathtt{PERMTEST}}}
\newcommand{\speartest}{\hyperref[algorithm:speartest]{\mathtt{MATCH}}}
\newcommand{\spearman}{\hyperref[algorithm:spearman-pvalue]{\mathtt{SPEARMAN}}}
\newcommand{\match}{\mathtt{LAP}}
\newcommand{\fisher}{\hyperref[algorithm:fisher]{\mathtt{FISHER}}}

\newcommand{\firstparam}{\theta_1}
\newcommand{\secondparam}{\theta_2}
\newcommand{\firstinit}{\theta_1^0}

\newcommand{\init}{\theta^0}
\newcommand{\firstP}{P_1}

\newcommand{\fmlp}{f_\txt{mlp}}
\newcommand{\flm}{f_\txt{LM}}

\newcommand{\gateproj}{G}
\newcommand{\upproj}{U}
\newcommand{\downproj}{D}
\newcommand{\act}{\sigma}

\newcommand{\Thetamlp}{\Theta_\txt{mlp}}
\newcommand{\Thetablock}{\Theta_\txt{block}}
\newcommand{\Thetalm}{\Theta_\txt{LM}}
\newcommand{\Thetain}{\Theta_\txt{in}}
\newcommand{\Thetaout}{\Theta_\txt{out}}
\newcommand{\Thetapre}{\Theta_\txt{attn}}

\newcommand{\thetamlp}{\theta_\txt{mlp}}
\newcommand{\thetablock}{\theta_\txt{block}}

\newcommand{\thetain}{\theta_\txt{in}}
\newcommand{\thetaout}{\theta_\txt{out}}
\newcommand{\thetapre}{\theta_\txt{attn}}

\newcommand{\fpre}{f_\txt{attn}}
\newcommand{\fpost}{f_\txt{post}}
\newcommand{\fin}{f_\txt{in}}
\newcommand{\fout}{f_\txt{out}}
\newcommand{\frest}{f_\txt{rest}}

\newcommand{\ltwo}{\phi_{\ell_2}}
\newcommand{\csw}{\phi_{U^{(\ell)}}}
\newcommand{\csh}{\phi_{H^{(\ell)}}}
\newcommand{\dcos}{\mathtt{cossim}}
\newcommand{\jsd}{\phi_\txt{JSD}}
\newcommand{\rob}{\phi_\txt{MATCH}}

\newcommand{\firstM}{W_1}

\newcommand{\secondM}{W_2}

\newcommand{\activation}[2]{X_{#1}^{(#2)}}

\newcommand{\constr}{\txt{constrained}}
\newcommand{\unconstr}{\txt{unconstrained}}

\title{Independence Tests for Language Models}

%

\newcommand\blfootnote[1]{%
  \begingroup
  \renewcommand\thefootnote{}\footnote{#1}%
  \addtocounter{footnote}{-1}%
  \endgroup
}


\begin{document}

\author{\name Sally Zhu*, Ahmed Ahmed*, Rohith Kuditipudi*, Percy Liang \\ \addr Department of Computer Science, Stanford University
}


\maketitle

\blfootnote{$\ast$ Equal contribution; more junior authors listed earlier. We share code at \url{https://github.com/ahmeda14960/model-tracing}.}

\begin{abstract}
    We consider the following problem: given the weights of two models, can we test whether they were trained independently---i.e., from independent random initializations? We consider two settings: \textit{constrained} and 
    \textit{unconstrained}.
    In the $\constr$ setting, we make assumptions about model architecture and training and propose a family of statistical tests that yield exact p-values with respect to the null hypothesis that the models are trained from independent random initializations. 
    These p-values are valid regardless of the composition of either model's training data; we compute them by simulating exchangeable copies of each model under our assumptions
    and comparing various similarity measures of weights and activations between the original two models versus these copies. 
    We report the p-values from these tests on pairs of 21 open-weight models (210 total pairs) and find we correctly identify all pairs of non-independent models. Notably, our tests remain effective even if one of the models was fine-tuned for many tokens. 
    In the $\unconstr$ setting, where we make no assumptions about training procedures, can change model architecture, and allow for adversarial evasion attacks, the previous tests no longer work.
    Instead, we propose a new test which matches hidden activations between two models, and use it to construct a test
    that is robust to adversarial transformations and to changes in model architecture. The test can also perform \textit{localized testing}: identifying specific non-independent components of models. 
    Though we no longer obtain exact p-values from this test, empirically we find it behaves as one and reliably distinguishes non-independent models. Notably, we can use the test to identify specific parts of one model that are derived from another (e.g., how Llama 3.1-8B was pruned to initialize Llama 3.2-3B, or shared layers between Mistral-7B and StripedHyena-7B),
    and it is even robust to retraining individual layers of either model from scratch.
\end{abstract}

\section{Introduction}

Consider the ways in which two models could be related: one model may be a finetune of the other; one could be spliced and pruned from certain parts of the other; both models could be separately fine-tuned from a common ancestor; finally, they could be independently trained from each other. 
We consider the problem of determining whether two models are independently trained versus not from their weights, which we formalize as a hypothesis testing problem in which the null hypothesis is that the weights of the two models are independent. We concretely treat only the weight initialization as random and thus consider two models with different random initial seeds as independent, even if both models were trained on the same data, or one model was distilled from the outputs of the other.

\begin{figure}[t]
    \centering
    \includegraphics[width=0.98\linewidth]{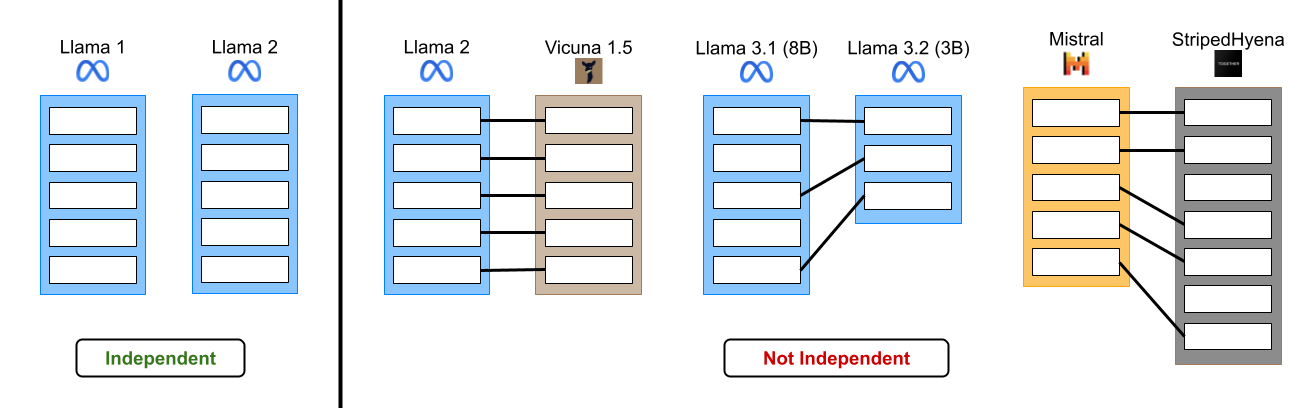}
    \caption{Given the weights of two language models, what relationships can we derive? They could be two models trained from scratch (left). Or, one model could be derived from the other: the dependent model could be a fine-tune, a pruned model, or a partially pruned model (right). We present tests to identify such relationships. 
    }
    \label{fig:figure1}
\end{figure}

A solution to this independence testing problem would enable independent auditors to track provenance of open-weight models. This is pertinent because while open-weight models enable broader access and customization, they also pose potential risks for misuse as they cannot be easily monitored or moderated \citep{kapoor2024openmodels}.
Model developers would also gain an enhanced ability to protect their intellectual property (IP) \citep{2024miquscandal, Peng2023dnnip} and enforce custom model licenses \citep{dubey2024llama3herdmodels,deepseekai2024deepseekv3technicalreport}. 

We consider two settings of the independence testing problem (Table \ref{tab:two_settings}).
In the $\constr$ setting, we make assumptions on training and initialization (essentially, that the training algorithm is equivariant to permuting the hidden units of the random initialization)
that enable us to obtain provably valid p-values. 
The main idea is that under these assumptions we can cheaply simulate many exchangeable copies of each model's weights and compare the value of some test statistic (e.g., cosine similarity of model weights) on each of these copies with the original model pair.
The assumptions generally hold in practice but preclude robustness to adversarial evasion attacks and architectural changes.

For the $\constr$ setting, we evaluate various test statistics on 21 models of the Llama 2 architecture \citep{touvron2023llama2openfoundation}, including 12 fine-tunes of Llama 2 and nine independently trained models, obtaining extremely small p-values for all 69 non-independent model pairs. Notably, our tests retain significant (small) p-values over different fine-tuning methods (e.g., different optimizers) and on models fine-tuned for many tokens from the base model such as Llemma \citep{azerbayev2024llemmaopenlanguagemodel}, which was fine-tuned on an additional 750B tokens from Llama 2 (37.5\% of the Llama 2 training budget). We are also able to confirm that the leaked Miqu-70B model from Mistral is derived from Llama 2-70B.

Next, we consider the unconstrained setting. While the constrained setting is useful for studying the existing ecosystem of open-weight models, simple modifications to model weights and architecture such as permuting hidden units can violate the assumptions of the $\constr$ setting if an adversary applies them after fine-tuning a model.
We address this limitation in the $\unconstr$ setting, wherein we do not make any assumptions on training.
Though we are not able to obtain provably exact p-values in the $\unconstr$ setting, we derive a test whose output empirically behaves like a p-value and reliably distinguishes non-independent models from independent models.
In particular, we first align the hidden units of two models---which may each have different activation types and hidden dimensions---and then compute some measure of similarity between the aligned models. Because of the alignment step, the test is robust to changes in model architecture and various adversarial evasion attacks (including those that break prior work). Moreover, it can localize the dependence: we can identify specific components or weights that are not independent between two models, even when they have different architectures.

\begin{table}[]
    \centering
    \begin{tabular}{|p{7cm}|p{7cm}|}
    \hline 
        \textbf{$\constr$ setting} & \textbf{$\unconstr$ setting} \\ \hline 
        gives exact p-values & does not give exact p-values \\ \hline 
        not robust to permutation & robust to permutations and other adversarial transformations \\ \hline 
        only applies to models of fixed shared architecture & works for models of different architectures and gives localized testing of shared weights \\ \hline 
    \end{tabular}
    \caption{Features of the $\constr$ versus $\unconstr$ problem settings.
    }
    \label{tab:two_settings}
\end{table}

In the unconstrained setting, we evaluate our test on 141 independent model pairs and find that its output empirically behaves like a p-value in the sense that it is close to uniformly distributed in $(0,1]$ over these pairs.
In contrast, it is almost zero for all dependent pairs we test (including those for which we simulate a somewhat strong adversary by retraining entire layers from scratch).
We also employ our test to identify \textit{pruned} model pairs, which occur when a model is compressed through dimension reduction techniques, such as reducing the number of layers or decreasing the hidden dimension by retaining select activations and weights; for example, we identified the precise layers of Llama 3.1-8B from which each of the layers of Llama 3.2-3B and Llama 3.2-1B derive.

We defer a full discussion of related work to Section~\ref{sec:related-work}. The work most closely related to ours is due to \citet{zeng2024humanreadablefingerprintlargelanguage}, who develop various tests to determine whether one model
is independent of another by computing the cosine similarity of the products of certain weight matrices in both models.
They show that their tests are robust to simple adversarial transformations of model weights that preserve model output;
however, we detail in Appendix~\ref{app:breakhuref} other transformations to perturb dependent models that evade detection by their tests, but not by our unconstrained setting tests. 
Additionally, unlike \citet{zeng2024humanreadablefingerprintlargelanguage}, in the $\constr$ setting we obtain exact p-values from our tests. 

\section{Methods}

\subsection{Problem formulation}\label{sec:basics}

Let $f : \Theta \times \mc{X} \to \mc{Y}$ denote a \textit{model} mapping parameters $\theta \in \Theta$ and an input $X \in \mc{X}$ to an output $f(X ; \theta) \in \mc{Y}$. 
We represent a model training or fine-tuning process as a \textit{learning algorithm} $A: \Theta \to \Theta$ that takes as input a set of initial parameters corresponding to either a random initialization or, in the case of fine-tuning, base model parameters. 
Specifically, $A$ includes the choice of training data, ordering of minibatches, and all other design decisions and even the randomness used during training---everything other than the initial model weights. 

Given two models $\firstparam,\secondparam \sim P$ for some joint distribution $P \in \mc{P}(\Theta_1 \times \Theta_2)$,
our goal is to test the null hypothesis
\begin{equation}
    H_0: \firstparam \perp \secondparam,
\end{equation}
where $\perp$ denotes independence of two random variables.
One example of a case where $\firstparam$ and $\secondparam$ might not be independent is if $\secondparam$ is fine-tuned from $\firstparam$, i.e. $\Theta_1 = \Theta_2$ (meaning the two models share the same architecture) and
$\secondparam = A(\firstparam)$ for some learning algorithm $A$.
We treat learning algorithms as deterministic functions.
Thus for $\firstparam = A_1(\theta_1^0)$ and $\secondparam = A_2(\theta_2^0)$, then $\theta_1^0 \perp \theta_2^0$, i.e., the two models having independent random initializations, implies our null hypothesis.

Deep learning models are often nested in nature. 
For example, Transformer models contain self-attention layers and multilayer perceptron (MLP) layers as submodels.
We formalize the notion of a model containing another via the following definition of a submodel. We consider a projection operator that capture the subset of the full model's parameters that are relevant to the submodel.\footnote{For example, in the case of a full Transformer model $\theta$ containing an MLP in a Transformer block, 
$\text{proj}(\theta)$ would return only the weights of that MLP $(G,U,D)$ to pass to $g = f_\text{mlp}$ (see Example \ref{example:glu-mlp}).}

\begin{definition}\label{defn:submodel}
    A model $f : \mc{X} \times \Theta \to \mc{Y}$ \textit{contains} a submodel $g : \mc{X}' \times \Theta' \to \mc{Y}'$ if there exists a projection operator $\txt{proj} : \Theta \to \Theta'$ such that for all $\theta \in \Theta$ we have
\begin{align*}
    f(x;\theta) = f_\txt{out}(g(f_\txt{in}(x);\txt{proj}(\theta)))
\end{align*}
for some functions $f_\txt{in} : \mc{X} \to \mc{X}'$
and $f_\txt{out} : \mc{Y}' \to \mc{Y}$ (which may depend on $\theta$).
\end{definition}

Many of our experiments will specifically involve Transformer models containing MLP layers with Gated Linear Unit (GLU) activations, which are widely used among language models.
We will thus specifically define this type of MLP using the following example.

\begin{example}\label{example:glu-mlp}
    (GLU MLP)
    Let $\gateproj,\upproj \in \R^{h \times d}$ and $\downproj \in \R^{d \times h}$. Let $\act : \R \to \R$ be an element-wise activation function. For $x \in \R^d$ and $\theta = (\gateproj,\upproj,\downproj) \in \Thetamlp^h$, let $\fmlp(x; \theta) \defeq D ( \sigma(Gx) \odot (Ux) )$. Also, for $X \in \R^{s \times d}$ let $\fmlp(X ; \theta) \in \R^{s \times d}$ denote the result of broadcasting $\fmlp$ over the rows of $X$.
\end{example}

In addition to the basic independence testing problem above, we also consider the problem of \textit{localized testing}: testing whether various pairs of submodels among two overall models are independent or not. A prototypical example of a localized testing problem is identifying which layers of a larger model (e.g., Llama 3.1-8B) were used to initialize a smaller model (e.g., Llama 3.2-3B) (in this case, we treat the layers as different submodels).

\subsection{Constrained Setting}

\subsubsection{Testing Framework}

Algorithm~\ref{alg:permtest} ($\permtest$) encapsulates our framework for computing p-values against the null hypothesis in the constrained setting, wherein we simulate $T$ exchangeable copies of the first model $\theta_1$ by applying transformations to its weights. 
The exchangeability of these copies holds under some assumptions on the learning algorithm and random initialization that produced the original model.
We capture these assumptions in the following definitions; together, they define the constrained setting.

\begin{algorithm}[h]\label{algorithm:permtest}
    \DontPrintSemicolon
    \caption{Test for computing p-values ($\permtest$)} \label{alg:permtest}
    \KwIn{Model weights $\theta_1, \theta_2$}
    \SetKwInOut{Parameter}{Parameters}
    \Parameter{test statistic $\phi$; discrete transformation class $\Pi$; sample size $T$}
    \KwOut{p-value $\hat{p} \in (0,1]$}
    $\mathtt{n\_ties} \gets 0$\;
    \For{$t \in 1, \dots, T$}{
        $\pi_t \sim \txt{Unif}(\Pi)$\;
        $\phi_t \gets \phi(\pi_t(\theta_1), \theta_2)$\;
        $\mathtt{n\_ties} \gets \mathtt{n\_ties} + \textbf{1} \{ \phi_t = \phi(\theta_1, \theta_2) \}$\footnotemark\;
    }
    $\xi \sim \txt{Unif}\left(\{0,...,\mathtt{n\_ties}\}\right)$ \tcp{break ties randomly}
    $\hat{p} \gets \frac{1}{T+1}(1 + \xi + \sum_{t=1}^T \textbf{1} \{ \phi_t < \phi(\theta_1, \theta_2) \})$\;
    \Return{$1-\hat{p}$}
\end{algorithm}
\footnotetext{We keep track of the number of ties (when the permuted and original statistic have the same value). To ensure uniform distribution of the p-value, we add a uniformly distributed number in $\{0, \dots, s \}$ to the count before yielding the p-value (line 6), essentially breaking times randomly.}

\begin{definition}[$\Pi$-invariance]\label{defn:perm-invar} 
    Let $\Pi \subset \Theta \to \Theta$. A distribution $P \in \mc{P}(\Theta)$
    is $\Pi$-\textit{invariant} if for $\theta \sim P$ and any $\pi \in \Pi$, the parameters $\theta$ and $\pi(\theta)$ are identically distributed.
\end{definition}

\begin{definition}[$\Pi$-equivariance]\label{defn:perm-equiv-det}
    Let $\Pi \subset \Theta \to \Theta$, $\pi \in \Pi$, and $\init \in \Theta$.
    A learning algorithm $A$ is $\Pi$-\textit{equivariant} if and only if $\pi(A(\theta^0)) = A(\pi(\theta^0))$.
\end{definition}

The main idea underlying $\permtest$ is that so long as $\theta_1 = A(\theta_1^0)$ and $\theta_1^0 \sim P$
for some $\Pi$-equivariant learning algorithm $A$ and $\Pi$-invariant distribution $P$, we can simulate $T$
exchangeable (but not independent) copies $\{\pi_t(\firstparam)\}_{t=1}^T$ of $\firstparam$ by sampling $\pi_t \iid \txt{Unif}(\Pi)$. 
This allows us to efficiently compute an exact p-value without actually repeating the training process of $\firstparam$.
In effect, Definitions~\ref{defn:perm-invar} and \ref{defn:perm-equiv-det} imply that $\pi$ commutes with $A$---i.e., $\pi(A(\firstinit)) = A(\pi(\firstinit))$.
Under exchangeability, the p-value output by $\permtest$ will be uniformly distributed over $\{(i+1)/(T+1)\}_{i=0}^T$.

Standard initialization schemes for feedforward networks exhibit symmetry over their hidden units. This symmetry means that permuting hidden units represents one class of transformations under which any such initialization remains invariant.
Moreover, the gradient of the model's output with respect to the hidden units is permutation equivariant; thus, any learning algorithm whose update rule is itself a permutation equivariant function of gradients (e.g., SGD, Adam, etc.) satisfies Definition~\ref{defn:perm-equiv-det} with respect to these transformations.
An
example of a learning algorithm that is not permutation equivariant is one that uses different learning rates for each hidden unit depending on the index of the hidden unit.

\begin{example}[Permuting hidden units]\label{example:permuting-hidden-units} 
    Let $\theta = (\gateproj,\upproj,\downproj) \in \Thetamlp^h$ parameterize a GLU MLP. Recall $\fmlp(x ; \theta) \defeq D ( \sigma(Gx) \odot (Ux) )$ for some element-wise activation function $\sigma : \R \to \R$. 
    Abusing notation, let $\Pi$ be the set of $h \times h$ permutation matrices such that for $\pi \in \Pi$ we define $\pi(\theta) = (\pi \gateproj, \pi \upproj, \downproj \pi^T )$ (permuting the rows of $G, U$ and the columns of $D$). Observe $\fmlp(x ; \theta) = \fmlp(x ; \pi(\theta))$ and $\pi(\nabla_\theta \fmlp(x;\theta)) = \nabla_{\pi(\theta)} f(x;\pi(\theta))$ for all inputs $x$.
\end{example}

The assumptions we make in the constrained setting suffice for $\permtest$ to produce a valid $p$-value, as we show in the following theorem.
Importantly, the result of the theorem holds (under the null hypothesis) without any assumptions on $\secondparam$. Therefore, a model developer of $\theta_1$ testing other models with our methods can have confidence in the validity of our test without trusting the provider of $\secondparam$.
Of course, if $\secondparam$ does not satisfy the equivariance assumption on training (as in the unconstrained setting), then $\permtest$ is unlikely to produce a low p-value even in cases where $\firstparam$ and $\secondparam$ are not independent (e.g. if an adversary finetunes $\secondparam$ from $\firstparam$ but then afterwards randomly permutes its hidden units).

\begin{theorem}\label{thm:main}
    Let $\phi: \Theta \times \Theta \to \R$ be a test statistic and $\Pi \subset \Theta \to \Theta$ be finite. 
    Let $A : \Theta \to \Theta$ be $\Pi$-equivariant and let $P \in \mc{P}(\Theta)$ be $\Pi$-invariant. For $\theta_1^0 \sim P$, let $\theta_1 = A(\theta_1^0)$. Let $\theta_2 \in \Theta$ be independent of $\theta_1$. 
    Then $\est{p} = \permtest(\firstparam,\secondparam)$ is uniformly distributed on $\{\frac{i+1}{T+1}\}_{i=0}^T$.
\end{theorem}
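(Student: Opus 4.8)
The plan is to show that the $(T+1)$ test-statistic values underlying $\permtest$ --- namely $Y_0 \defeq \phi(\theta_1,\theta_2)$ and $Y_t \defeq \phi(\pi_t(\theta_1),\theta_2)$ for $t=1,\dots,T$ --- form an exchangeable tuple, and then to invoke the classical randomization-test fact that the rank of $Y_0$ among $Y_0,\dots,Y_T$, with ties broken by an independent uniform draw, is uniform on $\{1,\dots,T+1\}$. The last two lines of Algorithm~\ref{alg:permtest} compute exactly such a rank (the draw $\xi\sim\txt{Unif}(\{0,\dots,\mathtt{n\_ties}\})$ being the tie-breaker), so dividing by $T+1$ and returning $1-\est p$ gives the stated distribution; everything thus reduces to exchangeability of $(Y_0,\dots,Y_T)$.

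First I would reduce exchangeability of $(Y_0,\dots,Y_T)$ to that of $(\theta_1^0,\pi_1(\theta_1^0),\dots,\pi_T(\theta_1^0))$. By $\Pi$-equivariance (Definition~\ref{defn:perm-equiv-det}), $\pi_t(\theta_1)=\pi_t(A(\theta_1^0))=A(\pi_t(\theta_1^0))$, so $Y_t=\phi(A(\pi_t(\theta_1^0)),\theta_2)$ and likewise $Y_0=\phi(A(\theta_1^0),\theta_2)$. Since $\theta_2$ is independent of $\theta_1$ and of the algorithm's internal randomness $\pi_1,\dots,\pi_T$, I would condition on $\theta_2=\vartheta$; then $Y_t=\psi(\pi_t(\theta_1^0))$ and $Y_0=\psi(\theta_1^0)$ for the deterministic map $\psi(\cdot)\defeq\phi(A(\cdot),\vartheta)$, and applying a deterministic function coordinatewise preserves exchangeability, so it suffices to prove $(\theta_1^0,\pi_1(\theta_1^0),\dots,\pi_T(\theta_1^0))$ is exchangeable (this then holds for every $\vartheta$, hence unconditionally).

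The core step is this last claim, and it is where the group structure of $\Pi$ is used (it holds in all instantiations we consider, e.g.\ Example~\ref{example:permuting-hidden-units}, where $\Pi$ is a group of permutation matrices). I would introduce an auxiliary $\pi_0\sim\txt{Unif}(\Pi)$ independent of $(\theta_1^0,\pi_1,\dots,\pi_T)$. First, $\Pi$-invariance of $P$ (Definition~\ref{defn:perm-invar}) gives $g(\theta_1^0)\overset{d}{=}\theta_1^0$ for every $g\in\Pi$, so averaging over $\pi_0$ shows $\eta\defeq\pi_0(\theta_1^0)$ has law $P$; being a function of $(\pi_0,\theta_1^0)$, $\eta$ is independent of $(\pi_1,\dots,\pi_T)$, whence $(\theta_1^0,\pi_1(\theta_1^0),\dots,\pi_T(\theta_1^0))\overset{d}{=}(\eta,\pi_1(\eta),\dots,\pi_T(\eta))=(\pi_0(\theta_1^0),\pi_1\pi_0(\theta_1^0),\dots,\pi_T\pi_0(\theta_1^0))$. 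Second --- and here the group structure enters --- writing $g_0\defeq\pi_0$ and $g_t\defeq\pi_t\pi_0\in\Pi$, the elements $g_0,\dots,g_T$ are i.i.d.\ $\txt{Unif}(\Pi)$: conditionally on $\pi_0=g$, each $\pi_t g$ is uniform on $\Pi$ because right translation by $g$ is a bijection of $\Pi$, and this conditional law is the same for all $g$, so $(g_1,\dots,g_T)$ is independent of $g_0$ with all of $g_0,\dots,g_T$ uniform; moreover $(g_0,\dots,g_T)$ is independent of $\theta_1^0$. Hence $(g_0(\theta_1^0),\dots,g_T(\theta_1^0))$ is exchangeable, since permuting i.i.d.\ coordinates leaves a joint law unchanged, and combining the two steps proves the claim.

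Finally I would do the randomization-test bookkeeping: attach i.i.d.\ $\txt{Unif}(0,1)$ variables $U_0,\dots,U_T$, order the pairs $(Y_i,U_i)$ lexicographically, and use exchangeability of $\{(Y_i,U_i)\}_{i=0}^T$ together with a.s.\ distinctness of the $U_i$ to conclude that the rank of $(Y_0,U_0)$ is uniform on $\{1,\dots,T+1\}$; then verify that $1+\xi+\sum_{t=1}^T\mathbf{1}\{Y_t<Y_0\}$ has exactly this distribution (the draw $\xi\sim\txt{Unif}(\{0,\dots,\mathtt{n\_ties}\})$ realizing the lexicographic tie-break), and push this through the division by $T+1$ and the final $1-(\cdot)$. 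I expect the main obstacle to be getting the exchangeability exactly right --- in particular, noticing that one genuinely needs $\Pi$ closed under composition and inversion (for a merely $\Pi$-invariant $P$ with non-invertible transformations one can make the permuted statistics fail to be exchangeable), and handling ties carefully so the output is uniform on the discrete grid rather than merely stochastically larger than uniform.
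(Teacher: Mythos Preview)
Your proposal follows the same strategy as the paper --- establish that $(\phi(\theta_1,\theta_2),\phi(\pi_1(\theta_1),\theta_2),\dots,\phi(\pi_T(\theta_1),\theta_2))$ is exchangeable and then deduce uniform rank with random tie-breaking --- but it fills in the details the paper's three-sentence sketch leaves implicit. In particular, your auxiliary-$\pi_0$ device and the explicit lexicographic tie-breaking are correct refinements, and you rightly flag that the exchangeability step genuinely needs $\Pi$ to be a group (so that right translation is a bijection and $\pi_t\pi_0\in\Pi$); this hypothesis is absent from the theorem statement but holds in every instantiation the paper uses, and without it the exchangeability claim --- and the theorem --- can fail, as your closing remark anticipates.
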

\begin{proof}
    We assume $\Pi$ is finite so that $\txt{Unif}(\Pi)$ is well-defined.
    From our assumptions on $A$ and $P$ and the fact that $\{\pi_t\}_{t=1}^T$ are independently drawn, it follows that the collection $\{\pi_t(\firstparam)\}_{t=1}^T$ comprises $T$ exchangeable copies of $\firstparam$.
    The independence of $\firstparam$ and $\secondparam$ thus implies $\{(\pi_t(\firstparam),\secondparam)\}_{t=1}^T$ comprises $T$ exchangeable copies of $(\firstparam,\secondparam)$. Because we break ties randomly, by symmetry it follows that $\phi(\theta_1, \theta_2)$ will have uniform rank among $\{\phi_t\}_{t=1}^T$.
\end{proof}

One notable (non-contrived) category of deep learning algorithms that are \textit{not} permutation equivariant are those that employ 
dropout masks to hidden units during training. In our framework, the dropout masks are specified in the deterministic learning algorithm $A$. Once we fix a specific setting of mask values in $A$, this algorithm will not be permutation equivariant unless the individual dropout masks are all permutation invariant (which is highly unlikely).
For completeness, we generalize the result of Theorem~\ref{thm:main} to apply to randomized learning algorithms that satisfy a notion of equivariance in distribution (which includes algorithms that use dropout) in Appendix~\ref{sec:randomized-alg}.
However, throughout the main text we will continue to treat learning algorithms as deterministic for the sake of simplicity, and also since dropout typically is no longer used in training language models \citep{chowdhery2022palmscalinglanguagemodeling}.

\subsubsection{Test Statistics}
\label{sec:test-stats}

We have shown $\permtest$ produces a valid p-value regardless of the test statistic $\phi$ we use.
The sole objective then in designing a test statistic is to achieve high statistical power: we would like $\est{p} = \permtest(\firstparam,\secondparam)$ to be small when $\firstparam$ and $\secondparam$ are not independent.
The test statistics we introduce in this section apply to any model pair sharing the same architecture. 

Prior work \citep{xu2024instructionalfingerprintinglargelanguage} proposed testing whether two models are independent or not based on the $\ell_2$ distance between their weights, summed over layers. Specifically
for a model with $L$ layers parameterized by $\Theta = \Theta_1 \times ... \times \Theta_L$, with $\firstparam = (\firstparam^{(\ell)})_{\ell=1}^L$ and $\secondparam = (\secondparam^{(\ell)})_{\ell=1}^L$,
let $\ltwo(\firstparam,\secondparam) \defeq - \sum_{i=1}^L \ell_2(\theta_1^{(i)},\theta_2^{(i)})$.
We can obtain p-values from $\ltwo$ by using it within $\permtest$.
However, a major limitation is that in order to obtain a p-value less than $1/(T+1)$ we must recompute $\ltwo$ at least $T$ times; the effective statistical power of our test using $\ltwo$ is therefore bottlenecked by computation.

To address this limitation, we propose a family of test statistics whose distribution under the null is identical for \textit{any} model pair. Consider 
$m, n \in \N$ and some function $M : \Theta \to \R^{n \times m}$ that maps model weights to a matrix, such as returning a specific layer's weight matrix.
The proposed test statistics all share the following general form based on Algorithm~\ref{alg:speartest} ($\speartest$) for varying $M$:
\begin{align}\label{eqn:phi-m}
    \phi_M(\firstparam,\secondparam) \defeq \spearman(\speartest(M(\firstparam),M(\secondparam)),[1,...,n]),
\end{align}
where $\spearman$ is the Spearman rank correlation (Algorithm~\ref{algorithm:spearman-pvalue}).

\begin{algorithm}[h]\label{algorithm:speartest}
    \DontPrintSemicolon
    \caption{Cosine similarity matching ($\speartest$)}
    \label{alg:speartest}
    \KwIn{Matrices $\firstM, \secondM$ with $h$ rows}
    \SetKwInOut{Parameter}{Parameters}
    \KwOut{Permutation $\pi : [h] \to [h]$}
    \For{$i \in 1, \dots, h$}{
        \For{$j \in 1, \dots, h$}{
            $C_{i,j} \gets \dcos((\firstM)_i,(\secondM)_j)$\;
        }
    }
    $\pi \gets \match(C)$\;
    \Return{$\pi$}
\end{algorithm}

Equation~\eqref{eqn:phi-m} is applicable to any model architecture $\Theta$ for which we can define a suitable matrix valued function $M$ of model parameters. For example, $M$ could directly extract a weight matrix or activation matrix from a model layer (based on some set of inputs), with each row corresponding to a hidden unit. We use $\speartest$ to align the rows of the two extracted matrices and then pass this alignment to $\spearman$ to compute the Spearman rank correlation \citep{spearmanrank} of this alignment with the identity map between rows.
We describe matching in Algorithm~\ref{alg:speartest}, wherein $\dcos$ denotes cosine similarity function and $\match$ denotes the algorithm of \citet{Ramshaw2012OnMA} we use to solve the matching problem. 

\begin{algorithm}[h]\label{alg:spearman-pvalue}
    \DontPrintSemicolon
    \caption{Deriving p-values from Spearman correlation ($\spearman$) from \cite{spearmanrank}}
    \label{algorithm:spearman-pvalue}
    \KwIn{Permutations $\pi_1, \pi_2: [h] \to [h]$}
    \SetKwInOut{Parameter}{Parameters}
    \KwOut{p-value $\hat p \in (0,1]$}
    $r \gets 1 - \frac{6 \sum (\pi_1[i] - \pi_2[i])^2}{h(h^2-1)}$\; 
    $t \gets r \sqrt{\frac{h-2}{1-r^2}}$\;
    $\hat p \gets \mathbb{P}(T_{n-2} > t)$ \; 
    \Return{$\hat p$}
\end{algorithm}

The idea of the test is that for two dependent models, each row of $M(\firstparam)$ should be similar to its counterpart in $M(\secondparam)$; thus, the alignment found by $\spearman$ will be close to the identity map.
Meanwhile, so long as $M$ is a $\Pi$-equivariant map (Definition~\ref{defn:equivariant-map}), then $\phi_M(\theta_1,\theta_2)$ under the null yields valid p-values (see Theorem 2), 
so we can use the more computationally-efficient Algorithm~\ref{algorithm:spearman-pvalue} to convert statistics to p-values instead of running $\permtest$.

\begin{definition}\label{defn:equivariant-map}
    (equivariant map) A matrix-valued function $M : \Theta \to \R^{n \times m}$ is $\Pi$-equivariant with respect to a class of transformations $\Pi : \Theta \to \Theta$ if there exists a bijection between $\Pi$ and the set of $n \times n$ permutation matrices such that $M(\pi(\theta)) = \pi M(\theta)$ for all $\theta \in \Theta$ and $\pi \in \Pi$.
\end{definition}

\begin{theorem}
Let $M: \Theta \to \R^{n \times m}$ be a $\Pi$ equivariant map and let $P \in \mc{P}(\Theta)$ be $\Pi$-invariant. Let $\firstparam,\secondparam \in \Theta$ be independent random variables, with $\firstparam = A(\firstinit)$ for $\firstinit \sim \firstP$. Then $\phi_M(\theta_1,\theta_2)$ is uniformly distributed on $(0,1]$.
\end{theorem}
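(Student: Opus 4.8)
The plan is to reduce the theorem to the single claim that, under the null, the permutation $\sigma := \speartest(M(\firstparam), M(\secondparam))$ produced in Algorithm~\ref{alg:speartest} is uniformly distributed over the symmetric group $S_n$. Granting this, the conclusion is the classical construction of the Spearman rank test run against the identity ranking: in Algorithm~\ref{algorithm:spearman-pvalue} the quantity $r$ is an affine, strictly decreasing function of $\sum_i(\sigma[i]-i)^2$, the map $r \mapsto t$ is strictly increasing on $(-1,1)$, and $\hat p = \mathbb{P}(T_{n-2} > t)$ is strictly decreasing in $t$; hence $\sigma \mapsto \hat p$ is a monotone function of the Spearman statistic of $\sigma$, and applying that statistic's null CDF to a draw from the null law returns a $\uniform(0,1]$ variable. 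As in Theorem~\ref{thm:main}, this is ``uniform'' only over the finite set of attainable values, and here there is the additional, standard gap between the $t$-distribution used in Algorithm~\ref{algorithm:spearman-pvalue} and the exact permutation null; I would state the conclusion with this understanding.

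The core of the argument is that $\speartest \circ M$ is equivariant in its first argument. First I would record how the cost matrix $C$ of Algorithm~\ref{alg:speartest} changes when $\firstparam$ is replaced by $\rho(\firstparam)$ for $\rho \in \Pi$: by $\Pi$-equivariance of $M$ (Definition~\ref{defn:equivariant-map}), $M(\rho(\firstparam)) = P_\rho M(\firstparam)$ for the permutation matrix $P_\rho$ associated to $\rho$, so the rows of the first input to $\speartest$ are permuted by $\rho$, which permutes the rows of $C$ by $\rho$ and fixes its columns. Relabeling the rows of the cost matrix of a linear assignment problem relabels every feasible assignment by the same permutation while preserving its value, so the optimum returned by $\match$ transforms equivariantly as well: $\speartest(M(\rho(\firstparam)), M(\secondparam)) = \sigma \circ \rho^{-1}$ (up to the composition convention). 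I would then invoke the distributional symmetry: since $\firstinit$'s initialization law is $\Pi$-invariant and $A$ is $\Pi$-equivariant (as in Theorem~\ref{thm:main}), $\firstparam$ has a $\Pi$-invariant law, so $\rho(\firstparam) \deq \firstparam$; as $\rho(\firstparam)$ is a function of $\firstparam$ alone and $\firstparam \perp \secondparam$, we get $(\rho(\firstparam),\secondparam) \deq (\firstparam,\secondparam)$, and therefore $\sigma \circ \rho^{-1} \deq \sigma$ for every fixed $\rho \in \Pi$.

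Finally I would conclude uniformity of $\sigma$ as a purely group-theoretic fact. By Definition~\ref{defn:equivariant-map} the assignment $\rho \mapsto P_\rho$ is a bijection from $\Pi$ onto all $n \times n$ permutation matrices, so $\{\rho : \rho \in \Pi\}$ realizes every element of $S_n$; a law on the finite group $S_n$ that is invariant under (right) translation by every group element is the uniform law, giving $\sigma \sim \uniform(S_n)$, which closes the reduction.

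The step I expect to require the most care is making the equivariance of $\speartest$ genuinely hold: pinning down composition conventions so that permuting the rows of $C$ conjugates the assignment returned by $\match$ by exactly $\rho$, and ensuring $\match$ is a well-defined, row-label-independent function --- which needs a canonical tie-breaking rule for the assignment problem, a non-issue when $P$ is continuous (no ties in the cosine-similarity matrix almost surely) but something to be spelled out otherwise. The second thing to be upfront about is the mismatch, already present in Theorem~\ref{thm:main}, between exact uniformity on $(0,1]$ and uniformity over a finite grid, compounded here by the $t$-approximation standing in for the exact Spearman null.
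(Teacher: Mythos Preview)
Your proposal is correct and follows essentially the same approach as the paper's (very terse) proof: argue that under the null the permutation returned by $\speartest$ is uniform over $S_n$, and then invoke the Spearman null distribution to convert it to a p-value. Your version is considerably more careful than the paper's --- you explicitly establish the row-permutation equivariance of $\match$, derive uniformity via translation invariance on $S_n$, and flag the $t$-approximation and tie-breaking caveats that the paper glosses over.
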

\begin{proof}
    As $M$ is a $\Pi$-equivariant map, if $\firstparam \perp \secondparam$ then letting $\pi = \match(C)$ in $\speartest$ is equivalent in distribution to sampling $\pi \sim \txt{Unif}(\Pi)$. Then the output of $\speartest$ is identical in distribution for \textit{any} pair of independent models, and can be converted to a p-value using $\spearman$ and the distribution for the Spearman correlation coefficient (t-distribution with $h-2$ degrees of freedom).
\end{proof}

We can choose various different functions for $M$, with each yielding a valid test statistic.
We focus our experiments on Transformer models consisting of a series of $L$ Transformer blocks. Each block contains a GLU MLP submodel, where $M(\theta)$ represents either the  up projection weights or the hidden-layer activations of these submodels.
In particular, let $U^{(\ell)}(\theta) \in \R^{h \times d}$ denote the first layer up projection weights of the MLP in the $\ell$-th block, where $h$ is the hidden dimension and $d$ is the input dimension, and let $H^{(\ell)}(\theta) \in \R^{h \times (N \cdot s)}$ denote the (flattened) hidden activations obtained from passing $N$ length $s$ input sequences $X \in \R^{N \times s \times d}$ to the same MLP module (the test is valid for any $X$; we will specify later how we choose $X$ in our experiments). 
The two main test statistics we will employ in our experiments are $\csw$ and $\csh$.

Both $U^{(\ell)}$ and $H^{(\ell)}$ are equivariant with respect to permuting the hidden units of the corresponding MLP, so we can directly interpret the outputs of $\csw$ and $\csh$ as p-values.
{Moreover, we can separately permute the hidden units of the MLP in the $\ell$-th block without changing the inputs or outputs of the other blocks.
Thus, as we show in Theorem~\ref{thm:fisher},
we can aggregate the p-values from $\csw$ and $\csh$ across blocks using Fisher's method (\cite{fisher_qa}) to obtain a more powerful test in Algorithm~\ref{alg:fisher} ($\fisher$). Specifically, Fisher's method of aggregating p-values requires independent tests from the same null hypothesis.

\begin{algorithm}[h]\label{algorithm:fisher}
    \DontPrintSemicolon
    \caption{Aggregating p-values ($\fisher$)}
    \label{alg:fisher}
    \KwIn{p-values $\{\est{p}^{(i)}\}_{i=1}^L$}
    \KwOut{p-value $\hat{p} \in (0,1]$}
    $\xi \gets \sum_{i=1}^L \log \est{p}^{(i)}$\;
    $\hat{p} \gets 1-\mathbb{P}(\chi^2_{2L} < -2\xi)$\;
    \Return{$\hat p$}
\end{algorithm}

\begin{theorem}
    \label{thm:fisher}
    Consider block indices $i,j \in [L]$ with $i \neq j$ for models with $L$ blocks. Suppose for $\ell \in \{i,j\}$ that
    \begin{itemize}
        \item[1.] $M^{(\ell)} : \Theta \to \R^{h \times N}$ is equivariant with respect to $\Pi^{(\ell)}$, i.e.,
              for any $\theta \in \Theta$ and $\pi^{(\ell)} \in \Pi^{(\ell)}$ we have
              \begin{align*}
                  M(\pi^{(\ell)}(\theta)) = \pi^{(\ell)} M(\theta).
              \end{align*} 
        \item[2.] $A$ is a $\Pi^{(\ell)}$-equivariant learning algorithm and $P \in \mc{P}(\Theta)$ is a $\Pi^{(\ell)}$-invariant distribution.
    \end{itemize}
    Let $\firstparam,\secondparam \in \Theta$. If $\firstparam \perp \secondparam$ for $\firstparam = A(\firstinit)$ with $\firstinit \sim P$,
    then
    \begin{align*}
        \speartest(M^{(i)}(\firstparam),M^{(i)}(\secondparam)) \perp \speartest(M^{(j)}(\firstparam),M^{(j)}(\secondparam)).
    \end{align*}
\end{theorem}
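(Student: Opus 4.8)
The plan is to lift the single‑block argument behind Theorem~2 to the product group $\mc{G} \defeq \Pi^{(i)} \times \Pi^{(j)}$ and to show that the pair $\big(\speartest(M^{(i)}(\firstparam),M^{(i)}(\secondparam)),\ \speartest(M^{(j)}(\firstparam),M^{(j)}(\secondparam))\big)$, which I abbreviate $(T_i,T_j)$, is not merely marginally uniform (as Theorem~2 already gives) but \emph{jointly} uniform on $\Pi^{(i)}\times\Pi^{(j)}$. Since a uniform law on a product set is a product measure, this immediately yields $T_i \perp T_j$.

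First I would record the structural facts that make $\mc{G}$ act on $\Theta$. Since $i \neq j$, the transformations in $\Pi^{(i)}$ permute only the hidden units of the MLP in block $i$ and those in $\Pi^{(j)}$ only those of block $j$; these act on disjoint coordinates of $\theta$, so the two families commute and generate an action of $\mc{G}$. Permuting block $i$'s hidden units leaves every weight and every activation of block $j$ unchanged, and vice versa, so $M^{(i)}$ is $\Pi^{(j)}$-invariant and $M^{(j)}$ is $\Pi^{(i)}$-invariant, while each $M^{(\ell)}$ is still $\Pi^{(\ell)}$-equivariant. Finally, equivariance and invariance are closed under composition ($\pi^{(i)}\circ\pi^{(j)}$ still commutes with $A$, and the pushforward of $P$ by $\pi^{(i)}\circ\pi^{(j)}$ is again $P$), so hypothesis~2 for $\ell\in\{i,j\}$ upgrades to: $A$ is $\mc{G}$-equivariant and $P$ is $\mc{G}$-invariant.

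Next I would run the randomization exactly as in the proofs of Theorems~\ref{thm:main} and~2. Draw $(\pi^{(i)},\pi^{(j)})\sim\txt{Unif}(\mc{G})$ independently of $(\firstinit,\secondparam)$. Because $\firstinit\sim P$ is independent of the draw and $P$ is $\mc{G}$-invariant, the conditional law of $(\pi^{(i)}\pi^{(j)})\cdot\firstinit$ given $(\pi^{(i)},\pi^{(j)})$ is again $P$, whence $\big((\pi^{(i)}\pi^{(j)})\cdot\firstinit,\pi^{(i)},\pi^{(j)}\big)\deq\big(\firstinit,\pi^{(i)},\pi^{(j)}\big)$; applying the $\mc{G}$-equivariant map $A$ and adjoining the independent $\secondparam$ gives $\big((\pi^{(i)}\pi^{(j)})\cdot\firstparam,\secondparam,\pi^{(i)},\pi^{(j)}\big)\deq\big(\firstparam,\secondparam,\pi^{(i)},\pi^{(j)}\big)$. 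Using equivariance of $M^{(i)}$ together with its $\Pi^{(j)}$-invariance, $M^{(i)}\big((\pi^{(i)}\pi^{(j)})\cdot\firstparam\big)=\pi^{(i)}M^{(i)}(\firstparam)$, so the cosine-similarity matrix $\speartest$ builds for block $i$ has its rows permuted by $\pi^{(i)}$ (and, symmetrically, the block-$j$ matrix has its rows permuted by $\pi^{(j)}$). Reindexing the assignment problem shows $\match$ responds to a row permutation $\rho$ of its cost matrix by right-composition with $\rho^{-1}$; hence, applying the deterministic map $(\firstparam,\secondparam)\mapsto(T_i,T_j)$ to the previous distributional identity yields $(T_i,T_j)\deq\big(T_i(\pi^{(i)})^{-1},T_j(\pi^{(j)})^{-1}\big)$ with $(\pi^{(i)},\pi^{(j)})\sim\txt{Unif}(\mc{G})$ independent of $(T_i,T_j)$.

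It remains to read off the conclusion. Since $\speartest$ returns an honest permutation, $T_i$ and $T_j$ take values in $\Pi^{(i)}$ and $\Pi^{(j)}$ respectively (we take each $\Pi^{(\ell)}$ to be the full group of permutations of block $\ell$'s hidden units, as in Example~\ref{example:permuting-hidden-units}, so $\match$'s output lies in it). Conditioning on $(T_i,T_j)=(\sigma,\tau)$, the vector $\big(\sigma(\pi^{(i)})^{-1},\tau(\pi^{(j)})^{-1}\big)$ is uniform on $\sigma\Pi^{(i)}\times\tau\Pi^{(j)}=\Pi^{(i)}\times\Pi^{(j)}$, a law that does not depend on $(\sigma,\tau)$. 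Therefore $(T_i,T_j)$ is distributed as $\txt{Unif}(\Pi^{(i)}\times\Pi^{(j)})$, which is a product measure, so $T_i\perp T_j$. I expect the real work to be concentrated in the second paragraph: pinning down that the two blocks' permutation groups act on disjoint coordinates, commute, leave each other's $M^{(\ell)}$ invariant, and that this locality plus closure-under-composition promotes the per-block hypotheses to $\mc{G}$; the remaining steps replay the bookkeeping of Theorem~2's proof, modulo the usual caveat that $\match$ must break ties consistently with $\match(\rho C)=\match(C)\rho^{-1}$, which holds off a measure-zero set of weights.
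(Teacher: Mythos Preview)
Your proposal is correct and follows essentially the same approach as the paper: both introduce random permutations from the product $\Pi^{(i)}\times\Pi^{(j)}$, use $\Pi$-invariance of $P$ and $\Pi$-equivariance of $A$ to argue the permuted parameters have the same law as the original, and then invoke the row-permutation equivariance of $\speartest$ to conclude. Your write-up is in fact more explicit than the paper's on several points the paper leaves implicit---namely the cross-block invariance $M^{(i)}\circ\pi^{(j)}=M^{(i)}$, the closure of equivariance/invariance under composition, and the passage from joint uniformity on $\Pi^{(i)}\times\Pi^{(j)}$ to independence---but the underlying argument is the same.
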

\begin{proof}
    Let $\firstparam' \sim A(\pi_1^{(i)} \circ \pi_2^{(j)} (\firstinit))$ for $\pi_1,\pi_2 \iid \txt{Unif}(\Pi)$. Then
    $\firstparam'$ is an independent copy of $\firstparam$ since taking the composition $\pi_1^{(i)} \circ \pi_2^{(j)} (\firstparam)$
    yields an independent copy of $\firstparam$ for any $\pi_1,\pi_2 \in \Pi$.
    From $\firstparam \perp \secondparam$, it follows for $\ell \in \{i,j\}$
    that $\speartest(M^{(\ell)}(\firstparam'),M^{(\ell)}(\secondparam))$ is identically distributed to
    $\speartest(M^{(\ell)}(\firstparam),M^{(\ell)}(\secondparam))$.
    The result then follows from the fact $\speartest$ is equivariant with respect to permuting the rows of its arguments: in particular,
    for any $\pi \in \Pi$ we have $\speartest(\pi \firstM,\secondM) = \pi \speartest(\firstM,\secondM)$.
\end{proof}

Recall $\csw$ and $\csh$ are functions of $\speartest(M^{(\ell)}(\firstparam),M^{(\ell)}(\secondparam))$ respectively for
$M^{(\ell)} = U^{(\ell)}$ and $M^{(\ell)} = H^{(\ell)}$, both of which satisfy the assumptions of the theorem.
Thus, the result of the theorem applies to both these test statistics, and the independence of the p-values from these test statistics across blocks follows directly from the independence of the statistics themselves. Hence, we can use Fisher's method and Algorithm \ref{alg:fisher} to aggregate p-values from $\csw$ or $\csh$ values to obtain a more powerful test. 

\subsection{Unconstrained Setting}
\label{robustness}

For the $\unconstr$ setting, our goal is to design a test that applies to models of different architectures and is robust to output-preserving transformations of model weights.
Recall our tests for the constrained setting satisfy neither of these desiderata: these tests assume both models have the same number of hidden units, and it is easy to fool them without changing the output of a model by permuting the order of the hidden units in the model.

Our robust test reposes on the design of $\phi_M$ in equation~\eqref{eqn:phi-m}. The goal is to identify two matrix valued functions of model parameters $M,M' : \Theta \to \R^{n \times m}$ that jointly satisfy the following condition: any output-preserving transformation of model parameters must transform both $M$ and $M'$ in the same way. Then, whereas previously we would correlate $\speartest(M(\theta_1),M(\theta_2))$ with the identity permutation, we instead define
\begin{align}\label{eqn:robust-test}
    \phi_{M,M'} \defeq \spearman(\speartest(M(\theta_1),M(\theta_2)),\speartest(M'(\theta_1),M'(\theta_2)).
\end{align}

The above goal is aspirational in the sense that for any nontrivial deep learning model we are not able to fully enumerate the set of transformations of model parameters to which model output is invariant; 
nonetheless, it will serve as a useful guiding principle for designing our robust test under the framework of equation~\eqref{eqn:robust-test}.
We organize the description of our full robust test---which is generally applicable to a variety of model architectures---into two parts: first, in Section \ref{sec:gluarchitecture} we instantiate equation~\eqref{eqn:robust-test} to obtain a test for GLU MLP models. Then, in Section \ref{sec:generalized} we use our GLU MLP test as a primitive for designing a test that applies to general deep learning models (including those which do not contain any GLU MLP submodels).

\subsubsection{Testing GLU models}
\label{sec:gluarchitecture}

Recalling our definition of a GLU MLP model in Example~\ref{example:glu-mlp}, for $k \in \{1,2\}$ let $\theta_k = (\gateproj_k,\upproj_k,\downproj_k) \in \Thetamlp^{h_k}$, and with inputs $X \in \R^{d \times N}$ let $H_\txt{up}(\theta_k) = U_k X \in \R^{\max\{h_1,h_2\} \times N}$ be the output of the up projection operation and let $H_\txt{gate}(\theta_k) = G_k X \in \R^{\max\{h_1,h_2\} \times N}$ be the output of the gate projection operation (with appropriate zero-padding when $h_1 \neq h_2$).
Due to the element-wise product operation, we conjecture that in general it is not possible to permute the rows of $\gateproj_k$ while preserving the output of $\theta_k$ without permuting the rows $\upproj_k$ in the same way, and so we use 
$\phi_{M,M'}$ with $M = H_\txt{gate}$ and $M' = H_\txt{up}$ for our GLU MLP test. Henceforth, we will shorthand this test as $\rob$.

As with the constrained setting, we focus much of our experiments on Transformer models, which recall consist of a series of $L$ Transformer blocks that each contain a GLU MLP submodel. Adopting the notational conventions of Section~\ref{sec:test-stats}, we can apply our GLU MLP test to the $\ell$-th block by taking $M = H_\txt{gate}^{(\ell)}$ and $M' = H_\txt{up}^{(\ell)}$, where like before (in the case of $\csh$) we obtain the activation inputs for each block by computing a forward pass through the full model over a set of length $s$ sequences of input tokens.

We can aggregate the results of these tests over blocks using $\fisher$, like we do for $\csw$ and $\csh$ in the constrained setting.
Alternatively, to perform localized testing we can apply the test to all possible $O(L^2)$ pairs of blocks between two Transformer models if we suspect that certain blocks from one model served as the initializations for different blocks in the other model. Specifically, we can test the $i$-th block of $\firstparam$ and the $j$-th block of $\secondparam$ using
\begin{align}
\label{eqn:matching-blocks}
    \rob^{(i,j)} \defeq \spearman(\speartest(H_\text{gate}^{(i)}(\theta_1),H_\text{gate}^{(j)}(\theta_2)),\speartest(H_\text{up}^{(i)}(\theta_1),H_\text{up}^{(j)}(\theta_2))).
\end{align}

\subsubsection{Beyond GLU Models}
\label{sec:generalized}

Thus far we have focused on models $f : \mc{X} \times \Theta \to \mc{Y}$ containing a GLU MLP submodel.
In particular, recalling Definition~\ref{defn:submodel},
we have assumed for some $\txt{proj}_\txt{mlp} : \Theta \to \Thetamlp^h$ that
\begin{align}\label{eqn:og-model}
    f(x;\theta) = f_\txt{out}(\fmlp(f_\txt{in}(x);\txt{proj}_\txt{mlp}(\theta))).
\end{align}
Now, our goal will be to test more general types of models.
In particular, we generalize to an arbitrary alternative submodel $f_\txt{alt} : \R^d \times \Theta_\txt{alt} \to \R^d$ with $\txt{proj}_\txt{alt} : \Theta \to \Theta_\txt{alt}$ such that
\begin{align}\label{eqn:alt-model}
    f(x;\theta) = f_\txt{out}(f_\txt{alt}(f_\txt{in}(x);\txt{proj}_\txt{alt}(\theta))).
\end{align}

In order to test whether two models $\firstparam,\secondparam \in \Theta$ of the more general form in equation~\eqref{eqn:alt-model} are independent, we will first construct proxy models of the form in equation~\eqref{eqn:og-model} and then apply our previous test $\rob$ to these proxy models.
We construct these proxy models by leveraging
the fact that $f_\txt{alt}$ shares the same input and output space with $\fmlp$.
Specifically, for $k \in \{1,2\}$ we first learn parameters $\est{\theta}_k \in \Thetamlp^h$ so that $\fmlp(\cdot \ ;\est{\theta}_k)$ approximates $f_\txt{alt}(\cdot \ ;\txt{proj}_\txt{alt}(\theta_k))$. We then return $\rob(\est{\theta}_1,\est{\theta}_2)$.
We capture this two-stage process in Algorithm~\ref{algorithm:general-test}.

\begin{algorithm}[H]\label{algorithm:general-test}
    \DontPrintSemicolon
    \caption{Generalized robust test}
    \KwIn{Model parameters $\firstparam, \secondparam \in \Theta$}
    \SetKwInOut{Parameter}{Parameters}
    \Parameter{distribution $P$ over $\R^d$}
    \KwOut{$\est{p} \in [0,1]$}
    \For{$k \in \{1,2\}$}{
        $\est{\theta}_k \gets \arg\min_{\est{\theta}} \Ep_{x \sim P} \left[ \norm{f_\txt{alt}(x;\txt{proj}_\txt{alt}(\theta_k)) - f_\txt{mlp}(x;\est{\theta}_k)}^2\right]$
    }
    \Return{$\est{p} \gets \rob(\est{\theta}_1,\est{\theta}_2)$}
\end{algorithm}

Perhaps surprisingly, we show that Algorithm~\ref{algorithm:general-test} is effective in practice at distinguishing independent versus non-independent models. The hidden dimension $h$ and input distribution $P$ with which we learn the GLU MLP are hyperparameters of the test. See Section \ref{sec:adversarial-experiments} for details. 
\section{Experimental Results}

\subsection{Constrained setting}
\label{sec:constrained-results}

We first validate the effectiveness of our tests in the constrained setting on open-weight language models --- 21 models trained with the Llama-7B architecture with public documentation on ground truth model independence. These models all contain $L = 32$ GLU MLPs, each part of its own Transformer block.

We run experiments with three different tests. Each test comprises two elements: a test statistic along with a method for computing p-values from the statistic.
For the first test, we use $\ltwo$ and compute p-values via $\permtest$ with $T = 99$.
The equivariant transformation class $\Pi$ is the set of permutations over both the hidden units of each MLP (see Example~\ref{example:permuting-hidden-units}) and the embedding dimension of the model (i.e., the inputs passed to the both the MLP and self-attention layers in each block); we defer the precise definition of $\Pi$ in this case to Appendix \ref{app:llama_permutation}.
For the other two tests, we compute p-values by directly aggregating the outputs of (respectively) $\csw$ and $\csh$ over $\ell \in [L]$ using $\fisher$.
We sample sequences of 4096 tokens uniformly at random from the models' vocabulary, then compute a forward pass through the full model while storing the MLP hidden layer activations, and use the input activations to the GLU MLP in the $\ell$-th layer as the activations to compute $\csh$.

In addition to these three tests, we report the Jensen-Shannon divergence between next token output distributions $\jsd$ \citep{jsd}.
Since $\jsd$ is (by definition) invariant to any transformation of weights that does not affect model output, we cannot compute meaningful p-values using $\permtest$; instead, in our experiments we report the raw test statistic value as a baseline reference.

\subsubsection{Results for Llama model tree}

\begin{figure}
    \centering
    \includegraphics[width=0.95\textwidth]{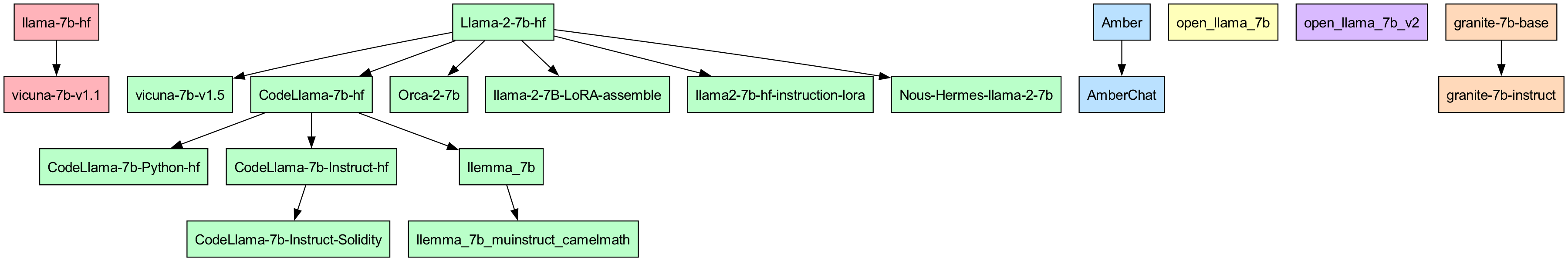}
    \caption{We enumerate the public Llama-7B models and delineate the sets of dependent model pairs by color.}
    \label{fig:model-lineage-forest}
\end{figure}

The 21 models we evaluated, shown in Figure \ref{fig:model-lineage-forest}, include 6 base (trained from scratch) models of Llama-7B architecture and a variety of their finetunes, which form six disjoint sets of models stemming from a diverse mix of industry labs and non-profits \citep{azerbayev2024llemmaopenlanguagemodel, sudalairaj2024lablargescalealignmentchatbots, liu2023llm360, li2023camelcommunicativeagentsmind}.
We consider any pair of models in the same tree as dependent and all other pairs as independent.
We include examples of further fine-tunes (e.g., \texttt{llemma\_7b}) of fine-tunes (e.g., \texttt{CodeLlama-7b-hf}) among the models we test.
We will mostly refer to models using by their Hugging Face identifiers, without the organization names for brevity.
We report results for a subset of these pairs involving base model \texttt{Llama-2-7b-hf} in Table~\ref{tab:basic_statistics_llama2} while deferring the rest and the full experimental setup details to Appendix \ref{app:exp_results}.

\begin{table}[h]
    \centering
    \begin{tabular}{c c|c|cc c c}
    \hline 
         & &  & & & p-values & \\ 
        $\theta_1$ & $\theta_2 $ & Indep.? & $\jsd$ (log) & $\ltwo$ & $\csw$ & $\csh$ \\ \hline 
         \texttt{Llama-2-7b-hf} & \texttt{llama-7b-hf} & \ding{51} & -11.10 & 0.98 & 0.60 & 0.25 \\
         \texttt{Llama-2-7b-hf} & \texttt{vicuna-7b-v1.1} & \ding{51} & -10.40 & 0.63 & 0.16& 0.64 \\
         \texttt{Llama-2-7b-hf} & \texttt{Amber} & \ding{51} & -10.69 & 0.75 & 0.36 & 0.88 \\
         \texttt{Llama-2-7b-hf} & \texttt{open-llama-7b} & \ding{51} & -8.38 & 0.26 & 0.36 & 0.71 \\
         \hline 
         \texttt{Llama-2-7b-hf} & \texttt{vicuna-7b-v1.5} & \ding{55} & -10.87 & 0.01 & $\varepsilon$ & $\varepsilon$ \\ 
         \texttt{Llama-2-7b-hf} & \texttt{CodeLlama-7b-hf} & \ding{55} & -10.62 & 0.01 & $\varepsilon$ & $\varepsilon$\\
         \texttt{Llama-2-7b-hf} & \texttt{llemma-7b} & \ding{55} & -10.24 & 0.01 & $\varepsilon$ & $\varepsilon$ \\
         \texttt{Llama-2-7b-hf} & \texttt{Orca-2-7b} & \ding{55} & -10.34 & 0.01 & $\varepsilon$ & $\varepsilon$ \\ \hline 
    \end{tabular}
    \caption{We report various constrained setting test statistics with $\firstparam$ as \texttt{Llama-2-7b-hf} and $\secondparam$ ranging over the listed models. The ``independent'' column is the ground truth. Here, $\varepsilon =$ 2.2e-308 (numerical underflow for a 64-bit float). We find our proposed tests $\csw$ and $\csh$ distinguish independent versus non-independent model pairs with high statistical power.
    }
    \label{tab:basic_statistics_llama2}
\end{table}

Consistent with prior work \cite{xu2024instructionalfingerprintinglargelanguage}, 
we find that $\jsd$ does not reliably distinguish independent versus dependent model pairs. For example, \texttt{CodeLlama-7b-hf} exhibits a larger divergence with \texttt{Llama-2-7b-hf} than the independently-trained models \texttt{llama-7b-hf} and \texttt{Amber}.

All other test statistics reliably distinguish independent versus dependent pairs; in particular, the p-values we obtain using the other test statistics are negligible for all dependent pairs (for $\ltwo$, because we run $\permtest$ with $T = 99$ for computational reasons, we cannot obtain a p-value less than $0.01$).
Notably, in contrast to our findings, prior work \citep{xu2024instructionalfingerprintinglargelanguage} argued that the $\ell_2$ distance between model parameters is not a reliable indicator of independence, in the sense that the $\ell_2$ distance between dependent pairs is sometimes larger than that of independent pairs (similar to the case of $\jsd$); the key difference is that \citet{xu2024instructionalfingerprintinglargelanguage} report the raw $\ell_2$ distance whereas we obtain p-values from the raw distances using $\permtest$.
We hypothesize that $\permtest$ effectively standardizes the raw distances.

Finally, we evaluated our tests on models of different architectures besides the Llama-7B architecture. We ran $\csw$ on four 70B parameter models, each with the same Llama 2-70B architecture, with results shown in Table \ref{tab:70b}. Notably, we verify that Miqu-70B from MistralAI is not independent from Llama 2-70B \citep{2024miquscandal}. 

\begin{table}[]
    \centering
    \begin{tabular}{c|c|c}
    \hline 
        $\theta_1$ & $\theta_2$ & $\csw$ \\ \hline
        \texttt{Llama-2-70b-hf} & \texttt{miqu-1-70b-pytorch} & $\varepsilon$ \\ 
        \texttt{Llama-2-70b-hf} & \texttt{Llama-3.1-70B} & 0.571 \\ 
        \texttt{Llama-2-70b-hf} & \texttt{Palmyra-Fin-70B-32K} & 0.539 \\ \hline 
    \end{tabular}
    \caption{We evaluate the constrained setting test involving up-projection weights $\csw$ (aggregated with $\fisher$) with $\theta_1$ as \texttt{Llama-2-70b-hf} and $\theta_2$ ranging over the listed models. Here, $\varepsilon = $ 2.2e-308, notably suggesting that \texttt{Llama-2-70b-hf} and the leaked Mistral model \texttt{miqu-1-70b-pytorch} are not independent.}
    \label{tab:70b}
\end{table}

\subsection{Unconstrained setting}
\label{sec:adversarial-experiments}

Next, we evaluate our robust test $\rob$ in the unconstrained setting, which encompasses varying model architecture and adversarial evasion attacks. We also examine the internal values within $\rob$ from the $\speartest$ algorithm to perform localized testing.

We first assess the previous 21 models of the Llama-7B architecture. We compute $\rob$ with the gate and up-projection matrices $M = H_\text{gate}^{\ell}$ and $M' = H_\text{up}^{\ell}$ of each MLP in block $\ell \in [L]$, and aggregate them with $\fisher$. We obtain the activations in the MLPs by using input sequences sampled from WikiText-103 and computing a forward pass through the full model, 
with results on all model pairs in Appendix \ref{app:robust_results}. 

We find that the distribution of $\rob$ on independent model pairs is close to uniform 
(Figure \ref{fig:phimatchlines}), whereas across all non-independent model pairs the statistic is at most $\eps$. 
Unlike the $\constr$ setting, where the p-values are valid by construction, the output of the robust test does not enjoy such theoretical guarantees; however, Figure~\ref{fig:phimatchlines} suggests that even in the $\unconstr$ setting our statistic $\rob$ behaves like a p-value, i.e. that it is uniformly distributed on $[0,1)$ under the null hypothesis. 

\begin{figure}[h]
\centering
  \begin{subfigure}[]{0.45\textwidth}
    \includegraphics[width=\textwidth]{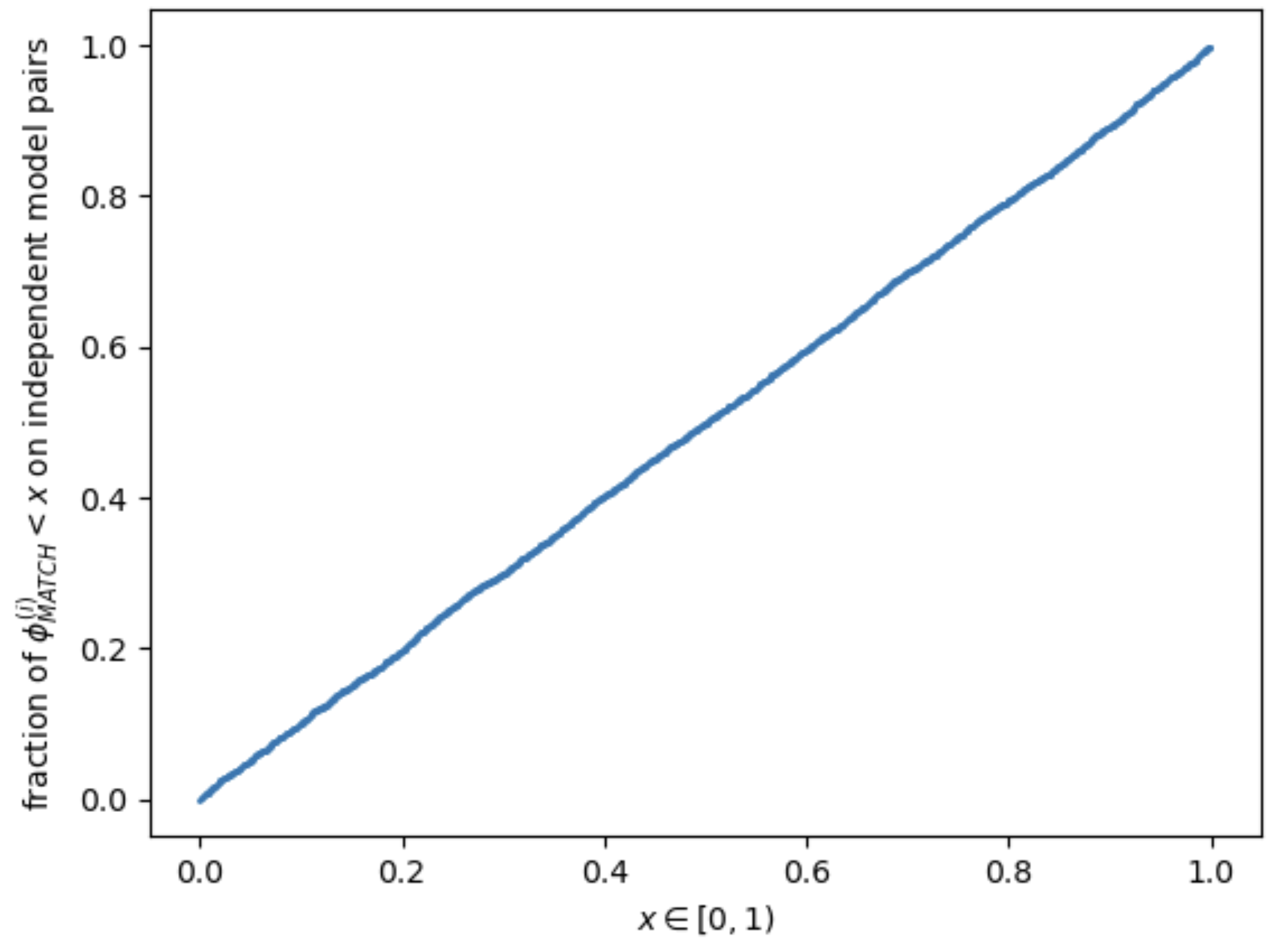}
    \caption{Plot of $x \in [0,1)$ vs. the fraction of $\rob^{(i)}$ (across all MLP blocks) of independent model pairs less than $x$.}
    \label{fig:robust_stat_all}
  \end{subfigure}
  \begin{subfigure}[]{0.45\textwidth}
    \includegraphics[width=\textwidth]{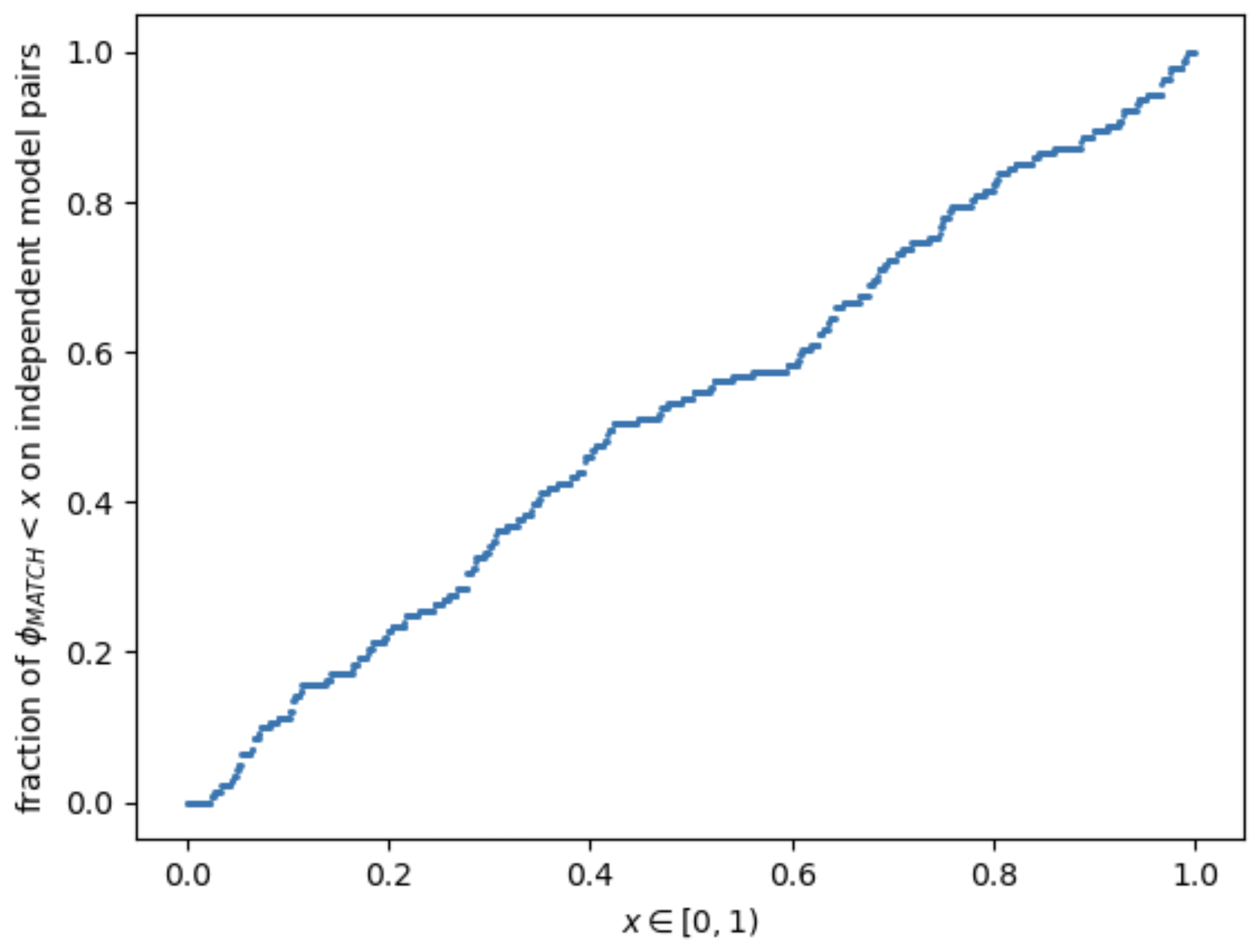}
    \caption{Plot of $x \in [0,1)$ vs. the fraction of $\rob$ ($\rob^{(i)}$ aggregated with $\fisher$ across all MLP blocks) of independent model pairs less than $x$.}
    \label{fig:robust_stat_agg}
  \end{subfigure}
  \caption{We plot the fraction of the statistic $\rob$ less than $x \in [0,1)$, aggregated with $\fisher$ and not for independent model pairs. Both plots roughly follow the line $y=x$, i.e. a uniform distribution in $[0,1)$ under the null, meaning $\rob$ empirically acts as a p-value.}
  \label{fig:phimatchlines}
\end{figure}

We next validate our tests on other model pairs with differing architectures: we compare the weights of the hybrid model \texttt{StripedHyena-Nous-7B} \citep{stripedhyena} with \texttt{Mistral-7B-v0.1} (where some layers of \texttt{StripedHyena-Nous-7B} are taken from \texttt{Mistral-7B-v0.1} and others are not) and find non-independent parameters via $\csw$. We compute $\csw$ on all parameters, which allows us to identify non-independence between specific parameters of the models, such as the self-attention matrices, rather than as models as a whole. We report values of $\csw$ among parameters of the embedding layer and first Transformer block in Table \ref{tab:stripedhyena} in Appendix \ref{app:stripedhyena}. From the small p-values we infer that some parameters, including the embedding layer and some self-attention matrices, were likely shared between the two models. 

\subsubsection{Testing identically distributed but independent models}
\label{sec:independent-similar-models}

We further evaluate the validity of our robust test through ablations by training two near-identical models that only differ on select sources of randomness. Specifically, we would like to check that $\rob$ does not incorrectly detect two similar (trained using the same learning algorithm) but independent (randomly initialized) models, as non-independent. 

To do this, we randomly initialize a model with the OLMo (7B) architecture \citep{groeneveld2024olmoacceleratingsciencelanguage} and train it on the \texttt{Dolma v1\_7} dataset \citep{soldaini2024dolmaopencorpustrillion}. We train a second model with independently chosen initialization and data ordering, but on the same dataset.
By only changing initialization and data ordering, we have two very similar models (trained with essentially the same learning algorithm $A$), yet are independent due to their random initializations. 

We keep checkpoints for both seeds after 100M, 1B, 10B, and 18B train tokens and evaluate the statistics $\csw, \csh$, and $\rob$ on the two models at each training checkpoint, reported in Table \ref{tab:olmo_indp}. We highlight that the p-values are broadly distributed, indicating that $\rob$ is valid even on two similarly-trained but independent models (the other tests are valid by construction but we report their results anyways just for reference).
We find that all test statistics work well, and there is also little difference in the results at different training checkpoints.

\begin{table}[]
    \centering
    \begin{tabular}{c|c|c|c|c|c}
    \hline 
        \# train tokens & $\csw$ & $\csh$ & $\ltwo$ & $\rob$ & $\jsd$ (log) \\ \hline 
        100M & 0.641 & 0.119 & 0.07 & 0.809 & -11.81 \\ 
        1B & 0.789 & 0.483 & 0.06 & 0.443 & -11.05 \\ 
        10B & 0.707 & 0.277 & 0.93 & 0.343 & -11.28 \\ 
        18B & 0.819 & 0.141 & 0.64 & 0.027 & -11.03 \\ \hline 
    \end{tabular}
    \caption{We evaluate the statistics $\csw, \csh$, $\ltwo$, $\rob$, and $\jsd$ on four training checkpoints between two similar but independently-trained OLMo models (weights initialized independently). We find that our proposed statistics are uniformly distributed, meaning no false positives even for two similarly-trained models.}
    \label{tab:olmo_indp}
\end{table}

\subsubsection{Simulating strong-ish adversaries}
\label{sec:mlp-retrain}

A significant difficulty in evaluating the robustness of our test $\rob$ to adversarial transformations---in particular, transformations of weights that preserve model output---is that we cannot exhaustively enumerate all such transformations.
Recalling that $\rob$ specifically considers the MLP layers contained within two models, we attempt to fool it by randomly reinitializing and retraining these MLP layers individually from scratch.
Our motivation is to simulate a somewhat strong adversary. We also conjecture that if there exist transformations of the weights of either MLP that preserve model output but fool our test, then retraining from scratch will be as likely to find these transformed weights versus something close to the original; thus, if our test is robust to retraining MLP layers from scratch then this suggests it may be robust to a broad variety of such transformations.

We reinitialize the first GLU MLP submodel of a model $\theta_1$ with an MLP with double the width and train it to minimize mean squared error with respect to the outputs of the original MLP, over an isotropic Gaussian input distribution.
We retrain each of the 32 MLP layers (keeping other layers fixed) of \texttt{vicuna-7b-v1.5} (a finetune of \texttt{Llama-2-7b-hf}) for 10k gradient steps (until the loss curve plateus). (Additional hyperparameters and a learning curve are in Appendix \ref{app:retrain}.) For all 32 runs, we compute $\rob$ for the retrained model with the original \texttt{Llama-2-7b-hf} and find $\rob$ remains very small between two non-independent models even if one model's MLP blocks have been retrained from scratch.
For example, retraining the first MLP layer (with a final train loss of 0.0048), the value of the statistic $\rob^{(1)}$ on the first MLP was less than $\varepsilon = $ 2.2e-308, indicating that the two models are not independent. We find the same is true for the other MLP layers as well (i.e. $\rob^{(\ell)}$ when evaluated on retrained layer $\ell$), with full results in Table \ref{tab:mlp_retrain} of Appendix \ref{app:retrain}.

\subsubsection{Generalizing to different architectures}
\label{app:distill-glu}
As we describe in Section~\ref{sec:generalized}, we can also apply our test to model architectures which do not contain GLU MLP submodels.
For example, the GPT-2 architecture uses a standard 2-layer MLP rather than a GLU MLP.
We apply our test (Algorithm~\ref{algorithm:general-test}) to \texttt{GPT2\_PMC} and \texttt{gpt2},
where the former is a finetune of the latter \citep{radford2019language}.
We use 30k training steps with an isotropic Gaussian input distribution to learn the GLU MLP parameters with which we replace the original MLP submodels in each model.
The test yields a value of 3.034e-61, thus distinguishing the two models as dependent. We show additional results on independent and non-independent models in Table \ref{tab:distill_mlp}.

\begin{table}[]
    \centering
    \begin{tabular}{c|c|c|c}
    \hline 
        $\theta_1$ & $\theta_2$ & Indep.? & $\rob^{(1)}$ \\ \hline
        \texttt{gpt2} & \texttt{GPT2\_PMC} & \ding{55} & 3.034e-61 \\ 
        \texttt{gpt2} & \texttt{artgpt2tox} & \ding{55} & 1.049e-75 \\
        \texttt{gpt2} & \texttt{distilgpt2} & \ding{55} & 1.079e-63 \\ 
        \texttt{Llama-3.2-1B} & \texttt{Llama-3.2-3B} & \ding{55} & 2.011e-70 \\ \hline 
        \texttt{openai-gpt} & \texttt{gpt2} & \ding{51} & 0.359 \\
        \texttt{openai-gpt} & \texttt{distilgpt2} & \ding{51} & 0.770 \\
        \texttt{gpt} & \texttt{Llama-3.2-1B} & \ding{51} & 0.481 \\ \hline 
    \end{tabular}
    \caption{We evaluate $\rob^{(1)}$ on models distilled with a GLU MLP. For GPT models, we reinitialize the feedforward-network with a GLU MLP and use Algorithm \ref{algorithm:general-test}. We find that even after training a GLU MLP, our unconstrained statistic $\rob$ detects non-independent models with low values, but not for independent models (indicated by the ground truth ``Indep.?'' column).}
    \label{tab:distill_mlp}
\end{table}

\subsection{Localized testing}
\label{sec:localized-testing}

Finally, we use $\rob$ on models pairs with different hidden dimensions, specifically on 
pruned model pairs, when model dimensions are reduced by preserving only select weights. 

In particular, we identify the specific Transformer blocks of $\texttt{Llama-3.1-8B}$ whose weights were likely used in initializing $\texttt{Llama-3.2-3B}$ and $\texttt{Llama-3.2-1B}$, as Meta reported that the first two models were pruned from the third \citep{llamablog}. We use $\rob$ on all pairs of MLP blocks (i.e. $\rob^{(i,j)}$ in equation (\ref{eqn:matching-blocks})), 
and match blocks by identifying pairs, $i$ from $\firstparam$ and $j$ from $\secondparam$, such that $\rob^{(i,j)}$ is less than 1e-4. We report the matched layers between the Llama 3.1 and Llama 3.2 models in Figure \ref{fig:llama-match} and in Appendix \ref{app:modelblockmatching}.

\begin{figure}[h]
    \centering
    \includegraphics[width=0.95\linewidth]{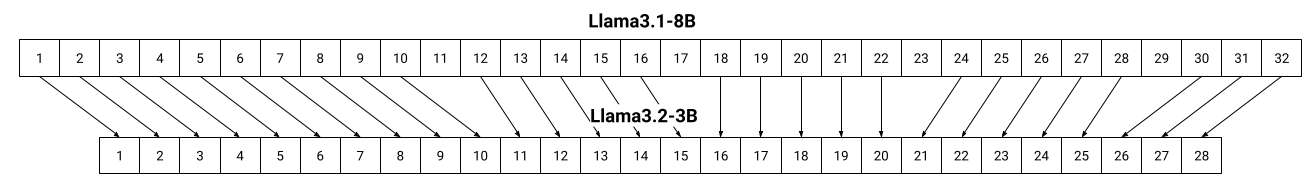}
    \caption{We evaluate $\rob^{(i,j)}$, the unconstrained setting statistic, between all pairs of GLU MLPs in Transformer block $i \in \{1, 2, \dots, 32 \}$ of Llama 3.1-8B and Transformer block $j \in \{1, 2, \dots, 28 \}$ of Llama 3.2-3B. Arrows indicate if $\rob^{(i,j)} < $ 1e-4 and suggest which Transformer blocks of Llama 3.1-8B were kept in the pruning process to initialize Llama 3.2-3B.}
    \label{fig:llama-match}
\end{figure}

We also identify which hidden units were most likely shared between the blocks when MLP dimension is reduced (from 14336 to 8192) during pruning, from the permutation $\pi$ returned from $\speartest(H_\text{up}^{(\ell)}(\theta_1),H_\text{up}^{(\ell)}(\theta_2))$---as $\pi$ describes the $h_2$ rows of $H_\text{up}^{(\ell)}(\theta_1)$ that are most similar with the rows of $H_\text{up}^{(\ell)}(\theta_2)$, which potentially describe the hidden units preserved in $\firstparam$ as it is pruned to $\secondparam$. The plot in Figure \ref{fig:matched-activations-llama} shows the activation rows from the up projection matrix $U^{(1)}$ of the first MLP of Llama 3.2-3B (8192 rows) (on the x-axis) matched with the rows from the up projection matrix of the first MLP of Llama 3.1-8B (out of 14336 rows) (on the y-axis). In particular, we can see that the activations are not simply the first the first 8192 rows pruned from the 14336-dimensional MLP, rather they appear to be distributed across all 14336 rows.

\begin{figure}
    \centering
    \includegraphics[width=0.5\linewidth]{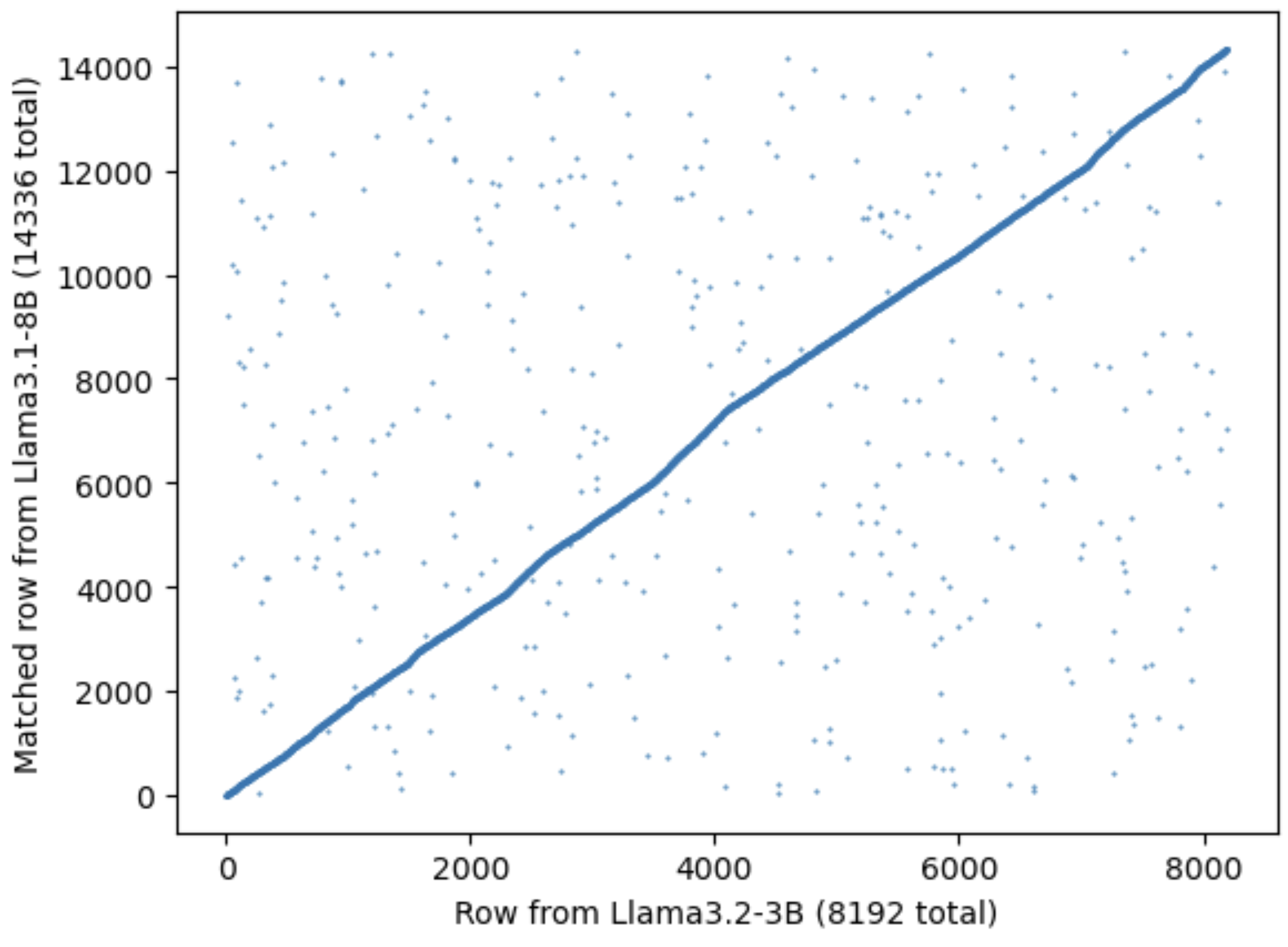}
    \caption{We align up-projection hidden activations from the first MLPs of Llama 3.1-8B and Llama 3.2-3B using $\speartest(H_\text{up}^{(\ell)}(\theta_1),H_\text{up}^{(\ell)}(\theta_2))$ and plot the activation row from Llama 3.2-3B on the x-axis and the matched activation row from Llama 3.1-8B on the y-axis. We see that the weights and activations of Llama 3.2-3B pruned from Llama 3.1-8B were likely uniformly selected.}
    \label{fig:matched-activations-llama}
\end{figure}

We also run this pairwise layer matching on the ShearedLlama \citep{xia2024shearedllamaacceleratinglanguage} models, which were the Llama 2-7B models pruned down to 1.3B and 2.7B parameters and find matching blocks, as well as on the pruned \texttt{Llama-3.1-Minitron-4B-Depth-Base} model \citep{muralidharan2024compactlanguagemodelspruning} from Llama 3.1, which we report in Appendix \ref{app:modelblockmatching}. 
\section{Related \& Future Work}\label{sec:related-work}

We develop methods for testing whether two models are independently trained given their weights that do not require intervening on the training of either model.

A related line of work known as model fingerprinting \citep{xu2024instructionalfingerprintinglargelanguage, zhang2024reefrepresentationencodingfingerprints, jin2024proflingofingerprintingbasedintellectualproperty, yang2024fingerprintlargelanguagemodels}
plants a secret signal in the weights of a model so that anyone who knows the key can detect the fingerprint from query access to the model (or fine-tunes of the model).
For example, \citet{xu2024instructionalfingerprintinglargelanguage} propose fingerprinting a model by fine-tuning on a secret random string; fingerprint detection then resolves to prompting a putative fingerprinted model with a prefix of the string.
Unlike \citet{xu2024instructionalfingerprintinglargelanguage}, we do not intervene on the training process of the models we test; however, we do require access to model weights in order to run our tests.

A separate line of work on text watermarking aims to attribute model-generated text by planting a watermark when sampling text from the model \citep{christ2023undetectablewatermarkslanguagemodels, kirchenbauer2024watermarklargelanguagemodels, kuditipudi2024robustdistortionfreewatermarkslanguage, aaronson2023watermarking, zhao2023provablerobustwatermarkingaigenerated}. 
Because it intervenes on sampling, text watermarking is inapplicable to open-weight models, the focus of both model fingerprinting and our setting. Recent work demonstrates that models can directly learn to generate watermarked text but also finds the learned watermark is not robust to further fine-tuning \citep{gu2024learnabilitywatermarkslanguagemodels}.

Finally, we show that the methods of the most related work \citep{zeng2024humanreadablefingerprintlargelanguage} (who investigate the same question of model provenance) are not robust to our transformations 
and also do not provide p-values for independence testing.
Other works like \citet{jin2024proflingofingerprintingbasedintellectualproperty} propose crafting specific queries that are likely to produce different responses among independently trained models and \cite{nikolic2025modelprovenancetestinglarge} identify similarities in non-independent model outputs; these methods do not require access to weights but also have fewer theoretical guarantees and do not produce exact p-values.

One limitation of our work is that we do not distinguish between different cases of dependent models, such as whether two models share a common ancestor versus one being a finetune of the other.
One direction for future work is to develop methods capable of differentiating between these cases to reconstruct a complete ``family tree'' of model lineage \citep{yax2024phylolminferringphylogeny}.
Another open question is whether it is possible to obtain exact guarantees (i.e., p-values) for our unconstrained setting.

\newpage 

\section*{Acknowledgments}
We gratefully acknowledge the support of this work by an NSF Frontier Award (NSF Grant no. 1805310) and Omidyar. Sally Zhu was supported by a Stanford CURIS Fellowship. Ahmed Ahmed is grateful to be supported by an NSF Graduate Research Fellowship and a Knight-Hennessy Fellowship.

\bibliography{biblio}
\bibliographystyle{tmlr}

\appendix
\section{Randomized Learning Algorithms}\label{sec:randomized-alg}
\begin{definition}\label{defn:perm-equiv}
    Let $\Pi \subset \Theta \to \Theta$. Let $\pi \in \Pi$ and $\init \in \Theta$, with $\bar\theta \sim A(\init), \theta = \pi(\bar\theta)$ and $\theta' \sim A(\pi(\theta_0))$. A randomized learning algorithm $A : \Theta \to \mathcal{P}(\Theta)$ is $\Pi$-\textit{equivariant} if and only if $\theta \stackrel{d}{=} \theta'$.
\end{definition}

\begin{theorem}\label{thm:main-randomized}
    Let $\phi: \Theta \times \Theta \to \R$ be a test statistic and $\Pi \subset \Theta \to \Theta$ be finite. 
    Let $A : \Theta \to \mc{P}(\Theta)$ satisfy Definition~\ref{defn:perm-equiv} and let $P \in \mc{P}(\Theta)$ be $\Pi$-invariant. 
    For $\theta_1^0 \sim P$, let $\theta_1 \sim A(\theta_1^0)$. Let $\theta_2 \in \Theta$ be independent of $\theta_1$. 
    Then $\est{p} = \permtest(\firstparam,\secondparam)$ is uniformly distributed on $\{\frac{i}{T+1}\}_{i=1}^T$.
\end{theorem}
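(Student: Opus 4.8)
The plan is to reduce Theorem~\ref{thm:main-randomized} to the deterministic Theorem~\ref{thm:main}. The key observation is that $\permtest$ never re-runs the learning algorithm: on inputs $\theta_1,\theta_2$ it only samples permutations $\pi_t\sim\mathrm{Unif}(\Pi)$ and the tie-breaker $\xi$, and then evaluates $\phi$ on the \emph{realized} weights. Hence the distribution of its output depends only on the joint law of $(\theta_1,\theta_2)$, not on $\theta_1^0$ or on the internal coins of the randomized $A$. So it suffices to exhibit a $\Pi$-invariant distribution $P'$ and a $\Pi$-equivariant deterministic learning algorithm $A'$ such that $A'$ applied to a $P'$-distributed input has the same law as $\theta_1$; then Theorem~\ref{thm:main} applies verbatim, with $\theta_2\perp\theta_1$ the only hypothesis needed on $\theta_2$. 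I would take $A' = \mathrm{id}$ (which is $\Pi$-equivariant by Definition~\ref{defn:perm-equiv-det}) and $P' = \mathrm{Law}(\theta_1)$, so that the whole theorem comes down to showing that $\mathrm{Law}(\theta_1)$ is $\Pi$-invariant.

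To that end I would first restate Definition~\ref{defn:perm-equiv} at the level of measures: writing $A(\init)\in\mathcal{P}(\Theta)$ for the output distribution of $A$ on input $\init$, in-distribution $\Pi$-equivariance says exactly that $\pi_\ast A(\init) = A(\pi(\init))$ for every $\init\in\Theta$ and $\pi\in\Pi$, where $\pi_\ast$ denotes pushforward. Then, writing $\mu := \mathrm{Law}(\theta_1) = \int_\Theta A(\omega)\,\mathrm{d}P(\omega)$ for the mixture of the Markov kernel $A$ over $P$, I would compute, for any $\pi\in\Pi$,
\begin{equation*}
\pi_\ast \mu \;=\; \int_\Theta \pi_\ast A(\omega)\,\mathrm{d}P(\omega) \;=\; \int_\Theta A(\pi(\omega))\,\mathrm{d}P(\omega) \;=\; \int_\Theta A(\eta)\,\mathrm{d}(\pi_\ast P)(\eta) \;=\; \int_\Theta A(\eta)\,\mathrm{d}P(\eta) \;=\; \mu ,
\end{equation*}
where the second equality is the measure-level equivariance above, the third is the change of variables $\eta=\pi(\omega)$, and the fourth uses the $\Pi$-invariance of $P$ (so $\pi_\ast P = P$). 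Thus $\mu$ is $\Pi$-invariant, and invoking Theorem~\ref{thm:main} with base distribution $\mu$ and learning algorithm $\mathrm{id}$ gives that $\permtest(\theta_1,\theta_2)$ has the same uniform distribution over the $(T+1)$-point grid as in that theorem.

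I expect the $\Pi$-invariance of $\mu$ to be the only step with any content. The subtlety is that, unlike the deterministic case where $A(\pi(\omega))=\pi(A(\omega))$ holds pointwise, here one only has the distributional identity $\pi_\ast A(\omega) = A(\pi(\omega))$, so the computation must be carried out entirely at the level of probability measures, with the interchange of $\pi_\ast$ and the $P$-mixture justified by Fubini for the Markov kernel $A$ (and measurability of $\omega\mapsto A(\pi(\omega))$, which follows from that of $A$ and $\pi$). Once $\mu$ is known to be $\Pi$-invariant the randomized training has been entirely absorbed into the base distribution, and no separate exchangeability argument is needed beyond the one already in the proof of Theorem~\ref{thm:main}.
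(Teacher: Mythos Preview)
Your proposal is correct. The paper's own proof simply states that the argument is identical to that of Theorem~\ref{thm:main}; your reduction---showing $\mathrm{Law}(\theta_1)$ is $\Pi$-invariant and then invoking Theorem~\ref{thm:main} with the identity learning algorithm---is a clean formal packaging of exactly that observation.
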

\begin{proof}
    The proof is identical to that of Theorem \ref{thm:main}.
\end{proof}

\section{Transformer Architecture and Notation}
\label{app:llamaarchitecture}

We consider models with the Llama Transformer architecture and define the notation henceforth, although this can easily be extended to other Transformer architectures. 

Following the definition of $\fmlp$ in Example \ref{example:glu-mlp} and the definition of a submodel in Definition \ref{defn:submodel}, we can define an abstraction of the full Llama language model architecture consisting of $L$ Transformer blocks sandwiched between an input and output layer. For the sequel, we will abuse notation in applying $\fmlp$ to multi-dimensional tensors by broadcasting along the last axis.
We use $d,n \in \N$ to respectively denote the model dimension and sequence length, where $\Thetalm = \Thetain \times \Thetablock^{\times L} \times \Thetaout$ with $\Thetablock$ denoting the parameter space of each Transformer block and $\Thetain,\Thetaout$ denoting the parameter spaces the input and output layers.
We decompose $\Thetablock = \Thetapre \times \Thetamlp$ and use $\frest : \Thetapre \times \R^{n \times d} \to \R^{n \times d}$ to denote all remaining parts of the Transformer besides the MLP. The inputs to $\frest$ are the input and output of the MLP, and the output of $\frest$ is fed directly to the MLP of the next layer. In particular, $\frest$ 
takes the input and output to the MLP of layer $i$, and first performs the residual connection following the MLP of layer $i$, then the self-attention and normalization components of layer $i+1$, and returns the input to the MLP of layer $i+1$.
We use $\fin : \mc{X} \times \Thetain \to \R^{n \times d}$ and $\fout : \R^{n \times d} \times \Thetablock^{(L)} \to \mc{Y}$ to respectively denote the input and output layers, i.e. the elements before the first MLP and after the last MLP.
Putting everything together gives the following definition of the model; we introduce the notation $\activation{\theta}{i}$ in the definition as a matter of convenience to track intermediate activations.
\begin{definition}\label{defn:lm}
    (GLU Transformer model)
    Let $\theta = (\thetain, \{ \thetablock^{(i)} \}_{i=1}^L, \thetaout) \in \Thetalm$ and $X \in \mc{X}$, with $\thetablock^{(i)} = (\thetapre^{(i)},\thetamlp^{(i)})$.
    Then $\flm(X;\theta) = \fout(\activation{\theta}{L} ; \thetaout)$ for $\activation{\theta}{0}=\fin(X;\thetain)$
    and
    \begin{align} \label{eqn:Hix}
        \activation{\theta}{i} = \frest(\activation{\theta}{i-1}, \fmlp(\activation{\theta}{i-1})).
    \end{align}
\end{definition}

For a Llama model, Table \ref{tab:transformer_table} describes the shapes of the model weight matrices for $i = 1, \dots, L$, for $V$ (vocab size), $d_\text{emb}$ (the hidden dimension), and $d_\text{mlp}$ (MLP hidden dimension). Following Definition \ref{defn:lm}, we have $\thetain = (E), \thetablock^{(i)}= (\thetapre^{(i)}, \thetamlp^{(i)})$ where $\thetapre^{(i)} = (\gamma_{\text{input},i}, W_{Q,i}, W_{K,i}, W_{V,i}, W_{O,i}, \gamma_{\text{post-attn},i})$, $\thetamlp^{(i)} = (G_i, U_i, D_i)$, and $\thetaout = (\gamma_\text{final}, O)$. We now describe a forward pass of the model. 

\begin{table}
    \centering
    \begin{tabular}{c|c}
    \hline 
        \textbf{Parameter name} & \textbf{Notation} \\ \hline
        embedding & $E \in \R^{V \times d_{\text{emb}}}$ \\ \hline 
        input layernorm & $\gamma_{\text{input},i} \in \R^{1 \times d_{\text{emb}}}$ \\ 
        attention query matrix & $W_{Q,i} \in \R^{d_{\text{emb}} \times d_{\text{emb}}}$ \\ 
        attention key matrix & $W_{K,i} \in \R^{d_{\text{emb}} \times d_{\text{emb}}}$ \\ 
        attention value matrix & $W_{V,i} \in \R^{d_{\text{emb}} \times d_{\text{emb}}}$ \\ 
        attention output matrix & $W_{O,i} \in \R^{d_{\text{emb}} \times d_{\text{emb}}}$ \\ 
        post-attention layernorm & $\gamma_{\text{post-attn, } i} \in \R^{1 \times d_{\text{emb}}}$\\ \hline 
        MLP gate projection & $G_i \in \R^{d_{\text{mlp}} \times d_{\text{emb}}}$ \\ 
        MLP up projection & $U_i \in \R^{d_{\text{mlp}} \times d_{\text{emb}}}$ \\ 
        MLP down projection & $D_i \in \R^{d_{\text{emb}} \times d_{\text{mlp}}}$ \\ \hline 
        final layernorm & $\gamma_\text{final} \in \R^{1 \times d_{\text{emb}}}$ \\ 
        linear output & $O \in \R^{d_{\text{emb}} \times V}$ \\ \hline 
    \end{tabular}
    \caption{We describe our notation and the dimensions of parameters of the Llama model architecture. Here, $i$ ranges over the number of Transformer blocks.}
    \label{tab:transformer_table}
\end{table}

We define the softmax function on a vector $v = (v_1, \dots, v_n)$, softmax$(v)$, as
\begin{equation*}
    \text{softmax}(v)_i = \frac{e^{v_i}}{\sum_{k=1}^n e^{v_k}}.
\end{equation*}
On batched input $X \in \R^{N \times n \times m}$ where each $X^{(b)} = [w_1 | \dots | w_m] \in \R^{n\times m}$ with column vectors $w_i$, we define the softmax as 
\begin{equation*}
    \text{softmax}(X^{(b)}) = [ \text{softmax}(w_1)| \dots | \text{softmax}(w_m) ],
\end{equation*}
\begin{equation*}
    \text{softmax}(X) = [ \text{softmax}(X^{(1)})| \dots | \text{softmax}(X^{(N)}) ].
\end{equation*}

For a forward pass of the model $\flm(X;\theta)$, consider an input sequence of tokens $X \in \{ 0, 1 \}^{N \times V}$ as one-hot vectors where $n$ is sequence length. Then 

We feed the input through: 
\begin{enumerate}
     \item ($\fin$) Embedding layer: 
    \begin{equation*}
        \activation{\theta}{0}= \fin(X; \thetain) = X E \in \R^{N \times d_\text{emb}}
    \end{equation*}
    \item ($\fpre, \fmlp, \fpost$) For each Transformer block $i = 0, 1, \dots, L$, through $\fpre$, $\fmlp$, and $\fpost$:
    \begin{enumerate}
        \item Input layernorm: 
        \begin{equation*}
            X_{\text{LN}_1}^{(i)} = \frac{\activation{\theta}{i}}{\sqrt{\text{Var}(\activation{\theta}{i}) + \varepsilon}} \odot \gamma_{\text{input}, i}
        \end{equation*}
        (with variance over the last axis) for some offset $\varepsilon$ (typically 1e-6). 
        \item Causal multi-head self-attention: Split $X_{\text{LN}_1}^{(i)}$ on the first axis into nheads $X_{\text{LN}_1, j}^{(i)}, \dots, X_{\text{LN}_1, \text{nheads}}^{(i)}$. On each head $X_{\text{LN}_1, j}^{(i)}$, 
        \begin{equation*}
            X_{\text{SA}, j}^{(i)} = \text{self-attn}(X_{\text{LN}_1, j}^{(i)}) = \text{softmax}\left( \frac{X_{\text{LN}_1, j}^{(i)} W_{Q,i}^T (X_{\text{LN}_1, j}^{(i)} W_{K,i}^T)^T}{\sqrt{d_\text{emb}}} \right) X_{\text{LN}_1, j}^{(i)} W_{V,i}^T W_{O,i}^T
        \end{equation*}
        and concatenate $X_{\text{SA}, j}^{(i)}$ along the first axis again as $X_{\text{SA}}^{(i)}$.
        \item Dropout and residual connection: $X_{\text{DR}_1}^{(i)} = X_{\text{LN}_1}^{(i)} + \text{Dropout}(X_{\text{SA}}^{(i)})$
        \item Post-attention layernorm: 
        \begin{equation*}
            X_{\text{LN}_2}^{(i)} = \frac{X_{\text{DR}_1}^{(i)}}{\sqrt{\text{Var}(X_{\text{DR}_1}^{(i)}) + \varepsilon}} \odot \gamma_{\text{post-attn}, i}
        \end{equation*}
        (with variance over the last axis) for some offset $\varepsilon$. Then we have 
        \begin{equation*}
            \fpre(\activation{\theta}{i-1}; \thetapre^{(i)})=X_{\text{LN}_2}^{(i)}. 
        \end{equation*}
        \item Next, we feed through $\fmlp$, the multi-layer perceptron:
        \begin{equation*}
            \fmlp(X_{\text{LN}_2}^{(i)}; \thetamlp^{(i)}) = X_i^{\text{MLP}} = [ \sigma(X_i^{\text{LN}_2} G_i^T) \odot (X_i^{\text{LN}_2} U_i^T) ] D_i^T
        \end{equation*}
        for some activation $\sigma$ (e.g., SiLU). 
        \item Finally, we feed through $\fpost$, dropout and the residual connection: 
        \begin{equation*}
        \fpost(\thetamlp^{(i)})= \activation{\theta}{i+1} = X_i^{\text{DR}_1} + \text{Dropout}(X_i^{\text{MLP}})
        \end{equation*}
    \end{enumerate}
    \item ($\fout$) Final layernorm on the output $\activation{\theta}{N+1}$ from the final Transformer block:
    \begin{equation*}
        X_{\text{LN}}^{(L)} = \frac{\activation{\theta}{L}}{\sqrt{\text{Var}(\activation{\theta}{L}) + \varepsilon}} \odot \gamma_\text{final}
    \end{equation*}
    (with variance over the last axis) for some offset $\varepsilon$. Then, linear output embedding and softmax mapping to output probabilities: 
    \begin{equation*}
        \fout(\activation{\theta}{L}) = \text{softmax}(X_{\text{LN}}^{(L)}O^T), 
    \end{equation*}
\end{enumerate}
which defines the entire forward pass $\flm(X;\theta)$.

\section{Model Transformation Class}
\label{app:llama_permutation}

We describe two sets of equivariant transformations $\Pi$ on a Transformer model as described in Appendix \ref{app:llamaarchitecture}. (Abusing notation), the first set, $\Pi_\text{emb}$, consists of elements ${\pi_\text{emb}}$ where $\pi_\text{emb} \in \R^{d_\text{emb} \times d_\text{emb}}$ is a permutation matrix. The second set, $\Pi_\text{mlp}$, consists of elements $\pi_{\text{mlp}}$ where $\pi_\text{mlp} \in \R^{d_\text{mlp} \times d_\text{mlp}}$ is a permutation matrix. 

\begin{enumerate}
    \item $\pi_\text{emb}(\theta)$: Applying an embedding permutation $\pi_\text{emb} \in \mathbb{R}^{d_\text{emb} \times d_\text{emb}}$ by left or right multiplying all relevant matrices by $\xi_\text{embed}$ (permuting rows or columns).
    \item $\pi_\text{mlp}(\theta)$: Applying MLP permutations $\pi_{\text{mlp},i} \in \mathbb{R}^{d_\text{mlp} \times d_\text{mlp}}$ to MLP layers.
\end{enumerate}
These permutations are applied such that the outputs of the original model $\theta$ and the permuted model $\pi(\theta)$ remain aligned. We describe the details in Table \ref{tab:permute_model}. 
\begin{table}
    \centering
    \begin{tabular}{c|c|c|c}
    \hline 
        \textbf{Parameter name} & $\theta$ & $\pi_\text{emb}(\theta)$ & $\pi_\text{mlp}(\theta)$ \\ \hline
        embedding & $E$ & $E \pi_\text{emb}$ & $E$ \\ \hline 
        input layernorm & $\gamma_{\text{input},i}$ & $\gamma_{\text{input},i} \pi_\text{emb}$ & $\gamma_{\text{input},i}$ \\ 
        attention query matrix & \textbf{$W_{Q,i}$} & \textbf{$W_{Q,i}$}$\pi_\text{emb}$ & \textbf{$W_{Q,i}$} \\ 
        attention key matrix & \textbf{$W_{K,i}$} & \textbf{$W_{K,i}$}$\pi_\text{emb}$ & \textbf{$W_{K,i}$} \\ 
        attention value matrix & \textbf{$W_{V,i}$} & \textbf{$W_{V,i}$}$\pi_\text{emb}$ & \textbf{$W_{V,i}$} \\ 
        attention output matrix & \textbf{$W_{O,i}$} & $\pi_\text{emb}^T$\textbf{$W_{O,i}$} & \textbf{$W_{O,i}$} \\  
        post-attention layernorm & $\gamma_{\text{post-attn, } i}$ & $\gamma_{\text{post-attn, } i} \pi_\text{emb}$ & $\gamma_{\text{post-attn, } i}$ \\ \hline
        MLP gate projection & $G_i$ & $G_i \pi_\text{emb}$ & $\pi_{\text{mlp}, i} G_i$ \\ 
        MLP up projection & $U_i$ & $U_i \pi_\text{emb}$ & $\pi_{\text{mlp}, i} U_i$ \\ 
        MLP down projection & $D_i$ & $\pi_\text{emb}^T D_i$ & $D_i \pi_{\text{mlp}, i}^T$ \\ \hline 
        final layernorm & $\gamma_\text{final}$ & $\gamma_\text{final} \pi_\text{emb}$ & $\gamma_\text{final}$ \\ 
        linear output & $O$ & $\pi_\text{emb}^T O$ & $O$ \\ \hline 
    \end{tabular}
    \caption{We describe the equivariant transformation classes $\pi_\text{emb}$ and $\pi_\text{mlp}$ that we use in Llama-architecture model experiments. When running $\permtest$, we compose both $\pi_\text{emb}$ and $\pi_\text{mlp}$, with random permutations $\pi$ selected.}
    \label{tab:permute_model}
\end{table}

\section{Additional Constrained Setting Experimental Results}
\label{app:exp_results}
\label{app:nonrobust_results}

We report p-values from the constrained setting experiments --- statistics $\ltwo, \csw$, and $\csh$ on all 210 model pairs (from 21 Llama 2-architecture models) in Figures \ref{fig:ell2_pvalues}, \ref{fig:csw_pvalues}, and \ref{fig:csh_pvalues}, where the model names are colored by base model (ground truth). For all statistics, the p-values on independent model pairs are uniformly distributed, while they are all significant at $0.01$ and smaller for $\csw$ and $\csh$ (at $\varepsilon = $ 2.2e-308) for fine-tuned model pairs. 

As described in Section \ref{sec:constrained-results}, we compute $\ltwo$ with $\permtest$ with $T=99$; and for $\csw$ and $\csh$ we aggregate p-values across the 32 MLP submodels with $\fisher$. For $\csh$, activations are generated using inputs of sequences of tokens sampled uniformly from the models' vocabulary.

\begin{figure}
    \centering
    \includegraphics[width=\linewidth]{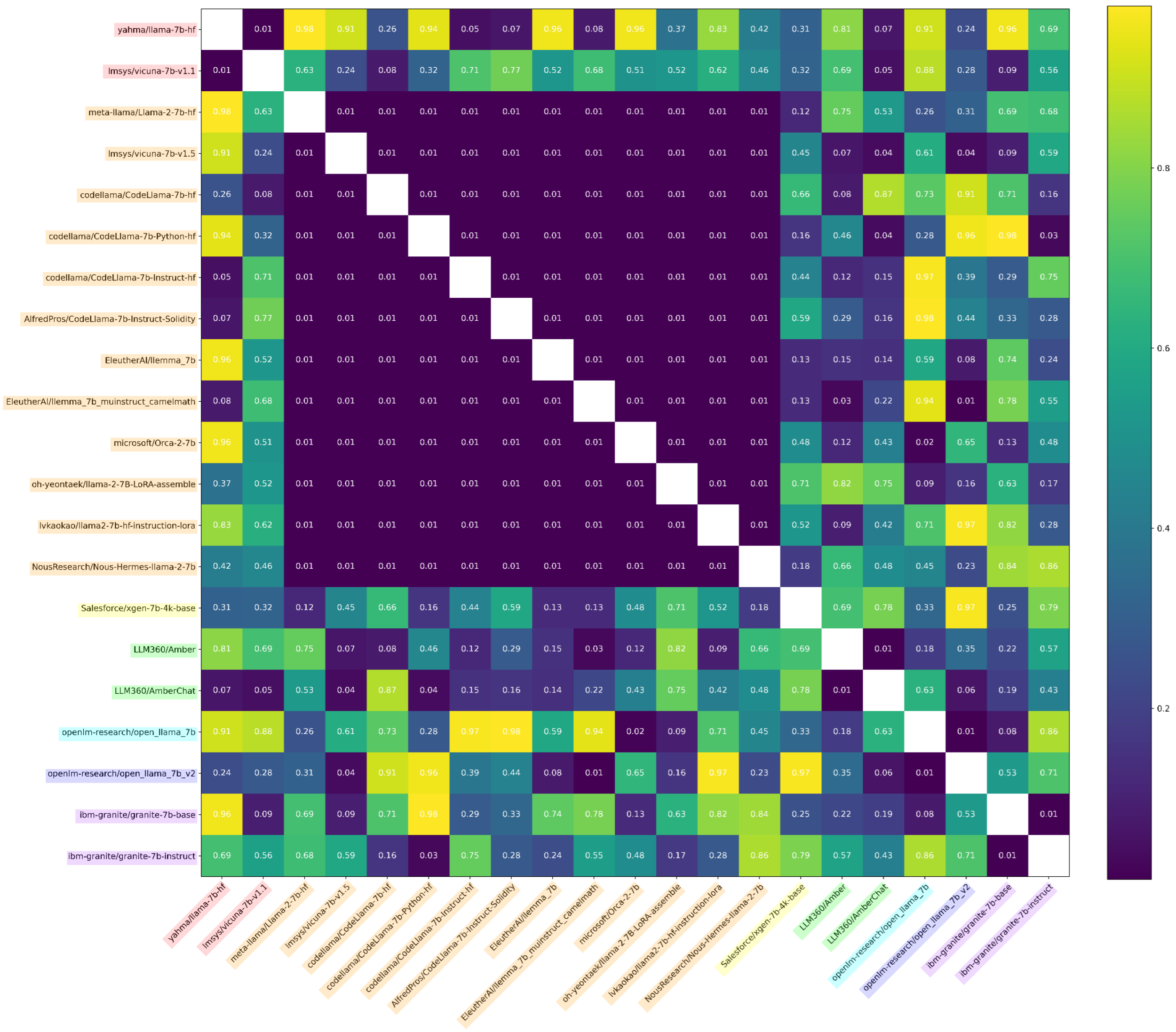}
    \caption{We report p-values from $\ltwo$ on Llama-7B model pairs, where $\ltwo$ is computed with $\permtest$ and $T=99$.}
    \label{fig:ell2_pvalues}
\end{figure}

\begin{figure}
    \centering
    \includegraphics[width=\linewidth]{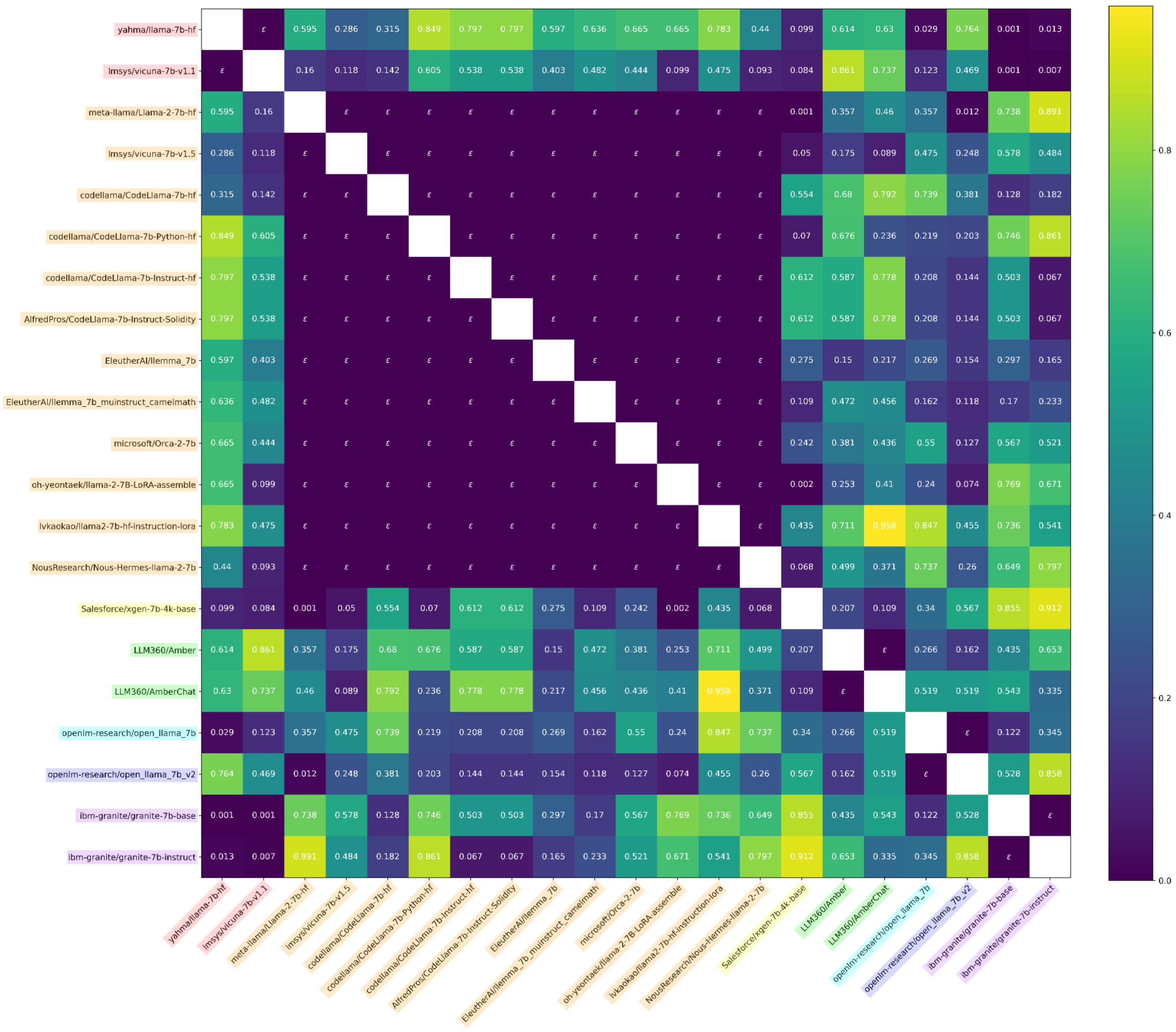}
    \caption{We report p-values from $\csw$ on Llama-7B model pairs, where $\varepsilon =$ 2.2e-308. We use $\speartest$ on $\csw$ and aggregate with $\fisher$ across the 32 MLPs.}
    \label{fig:csw_pvalues}
\end{figure}

\begin{figure}
    \centering
    \includegraphics[width=\linewidth]{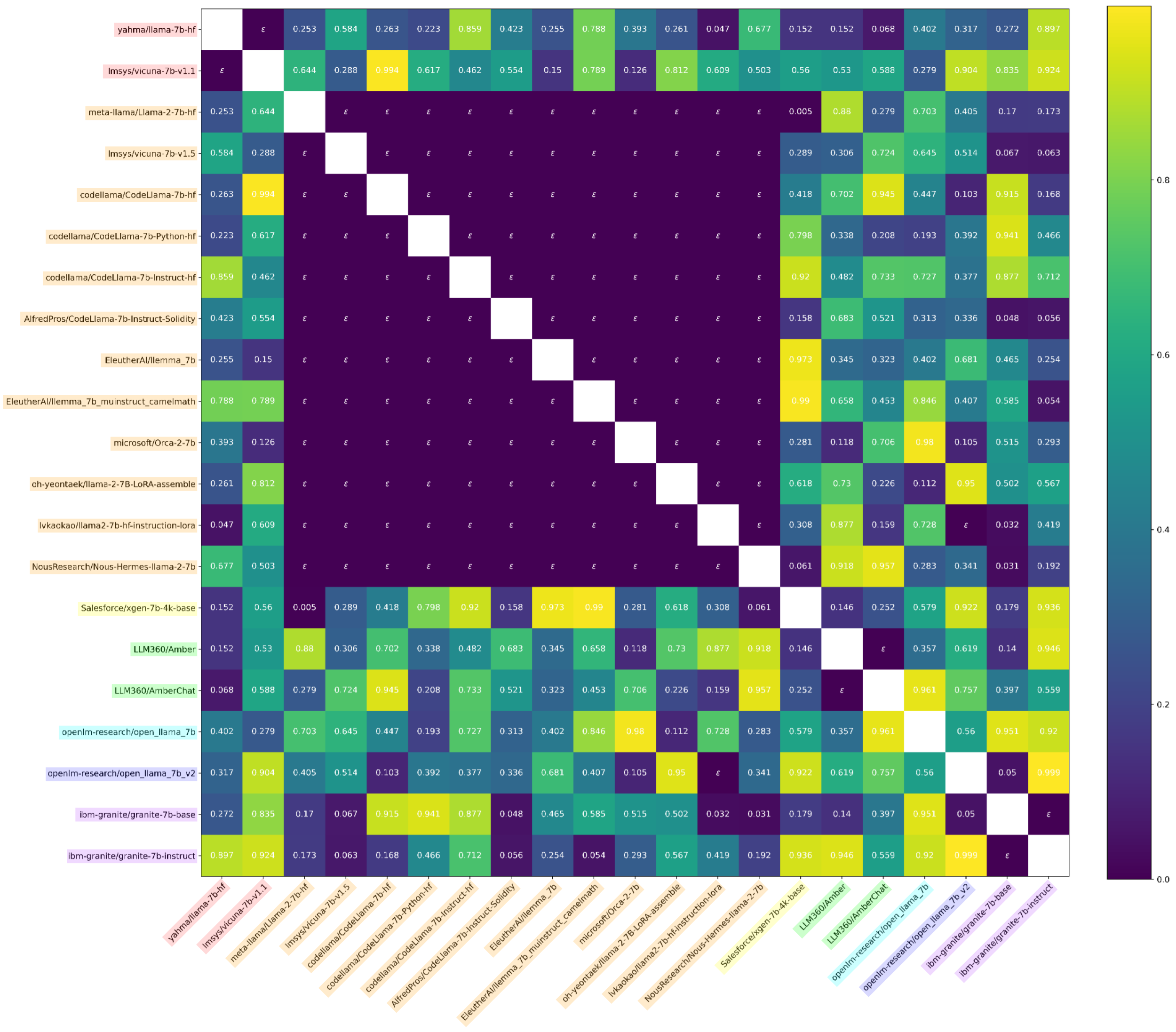}
    \caption{We report p-values from $\csh$ on Llama-7B model pairs, where $\varepsilon =$ 2.2e-308. We use $\speartest$ on $\csh$ and aggregate with $\fisher$ across the 32 MLPs.}
    \label{fig:csh_pvalues}
\end{figure}

\section{Additional Unconstrained Setting Experimental Results}
\label{app:robust_results}

We report values of $\rob$ on all model pairs in Figure \ref{fig:robust_stat}. The statistic is low ($< \varepsilon =$ 2.2e-308) for all non-independent model pairs, and uniformly distributed for independent model pairs, empirically acting as a p-value. 

\begin{figure}
    \centering
    \includegraphics[width=\linewidth]{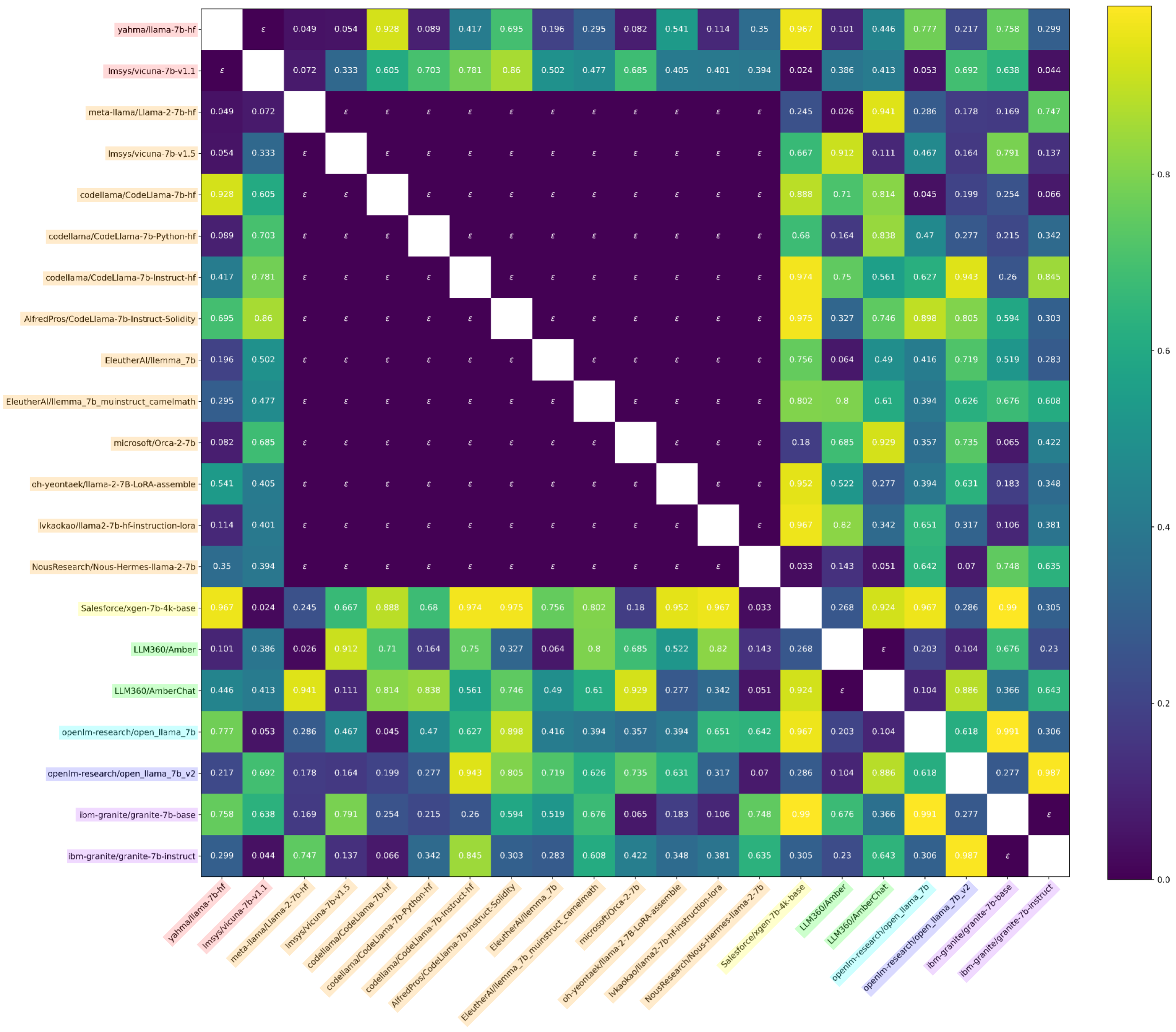}
    \caption{We report values of $\rob$ on Llama-7B model pairs, where $\varepsilon =$ 2.2e-308. We use $\speartest$ on $\rob$ and aggregate with $\fisher$ across the 32 MLPs.}
    \label{fig:robust_stat}
\end{figure}

\subsection{Striped Hyena Experiments}
\label{app:stripedhyena}
We report $\csw$ on specific parameters from \texttt{StripedHyena-Nous-7B} and \texttt{Mistral-7B-v0.1} shown in Table \ref{tab:stripedhyena}. We no longer only evaluate $\csw$ on MLP up projection matrices, so that we can investigate similarity in other parameters as well. These p-values no longer satisfy the independence requirement of Theorem \ref{thm:fisher}, so we do not aggregate them with $\fisher$.
\begin{table}[]
    \centering
    \begin{tabular}{c|c|c}
    \hline 
        Parameter name & Notation & $\csw$ \\ \hline 
        embedding & $E$ & 1.61e-16 \\ 
        attention query matrix & $W_Q^{(1)}$ & 6.17e-190 \\ 
        attention key matrix & $W_K^{(1)}$ & 1.47e-7 \\ 
        attention value matrix & $W_V^{(1)}$ & 1.56e-114 \\ 
        attention query matrix & $W_Q^{(1)}$ & 6.17e-190 \\ 
        attention output matrix & $W_O^{(1)}$ & 0.010 \\ 
        MLP gate projection & $G^{(1)}$ & 0.517 \\ 
        MLP up projection & $U^{(1)}$ & 0.716 \\ 
        MLP down projection & $D^{(1)}$ & 6.03e-80 \\ \hline 
    \end{tabular}
    \caption{We report $\csw$ on embedding and Transformer Block 1 parameters from \texttt{StripedHyena-Nous-7B} and \texttt{Mistral-7B-v0.1}, using $\csw$ on the specific parameters rather than only up-projection matrices. We find low p-values between some layers, including the embedding  matrix and attention matrices.}
    \label{tab:stripedhyena}
\end{table}

\subsection{MLP Retraining Experiments}
\label{app:retrain}

We retrain each of the 32 MLP layers by feeding in random inputs through the original MLP (gate, up, and down projection matrices.) We train for 10000 gradient steps using MSE loss and an Adam Optimizer with a learning rate of 0.001 and batch size of 5000. A sample learning curve is in Figure \ref{fig:mlpretrain_curve}.

\begin{figure}[h]
    \centering
    \includegraphics[width=0.5\linewidth]{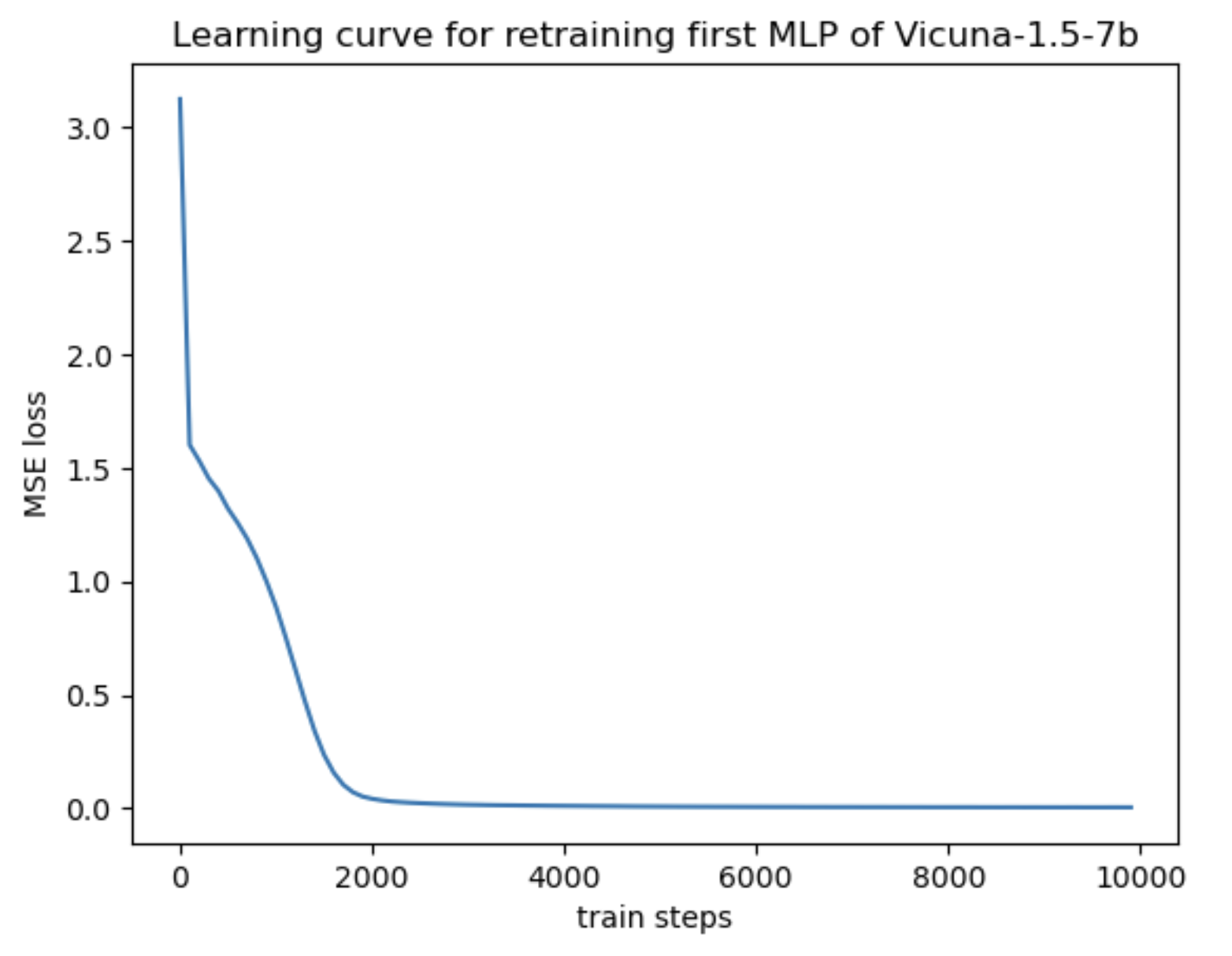}
    \caption{We show a learning curve for the MLP retraining experiments used in Section \ref{sec:mlp-retrain} (retraining one MLP).}
    \label{fig:mlpretrain_curve}
\end{figure}

The MLP retraining results for all 32 MLP layers of \texttt{vicuna-7b-v1.5}, compared with \texttt{Llama-2-7b-hf} are in Table \ref{tab:mlp_retrain}, showing that the statistic is robust to retraining of all layers. 

\begin{table}[]
  \begin{minipage}{0.32\linewidth}
    \centering
    \begin{tabular}{c|c|c|}
    \hline 
        MLP & Loss & $\log_{10}(\rob^{(i)})$ \\ \hline
        1 & 0.0048 & ${-479}$ \\ 
        2 & 0.012 & ${-485}$ \\ 
        3 & 0.0026 & ${-614}$ \\ 
        4 & 0.0034 & ${-580}$ \\ 
        5 & 0.0030 & ${-523}$ \\ 
        6 & 0.0035 & ${-513}$ \\ 
        7 & 0.0041 & ${-533}$ \\ 
        8 & 0.0042 & ${-464}$ \\ 
        9 & 0.0050 & ${-439}$ \\ 
        10 & 0.0050 & ${-377}$ \\ 
        11 & 0.0060 & ${-365}$ \\ \hline 
    \end{tabular}
  \end{minipage}
  \hfill
  \begin{minipage}{0.32\linewidth}
    \centering
\begin{tabular}{c|c|c|}
\hline 
        MLP & Loss & $\log_{10}(\rob^{(i)})$ \\ \hline
        12 & 0.0060 & ${-342}$ \\ 
        13 & 0.0058 & ${-330}$ \\ 
        14 & 0.0066 & ${-323}$ \\ 
        15 & 0.0063 & ${-414}$ \\ 
        16 & 0.0061 & ${-394}$ \\ 
        17 & 0.0063 & ${-445}$ \\ 
        18 & 0.0055 & ${-515}$ \\ 
        19 & 0.0045 & ${-571}$ \\ 
        20 & 0.0045 & ${-512}$ \\ 
        21 & 0.0047 & ${-595}$ \\ 
        22 & 0.0043 & ${-555}$ \\ \hline 
    \end{tabular}
  \end{minipage}
    \hfill
  \begin{minipage}{0.32\linewidth}
    \centering
\begin{tabular}{c|c|c}
\hline 
        MLP & Loss & $\log_{10}(\rob^{(i)})$ \\ \hline
        23 & 0.0043 & ${-593}$ \\ 
        24 & 0.0047 & ${-542}$ \\ 
        25 & 0.0050 & ${-497}$ \\ 
        26 & 0.0051 & ${-534}$ \\
        27 & 0.0052 & ${-482}$ \\ 
        28 & 0.0061 & ${-477}$ \\ 
        29 & 0.0065 & ${-433}$ \\
        30 & 0.0098 & ${-361}$ \\ 
        31 & 2.313 & ${-26.4}$ \\ 
        32 & 0.0114 & ${-174}$ \\ 
         & & \\ \hline 
    \end{tabular}
  \end{minipage}
  \caption{We retrain individual MLP blocks of \texttt{Llama-2-7b-hf} and \texttt{vicuna-7b-v1.5} then compute $\rob^{(i)}$ using Algorithm \ref{algorithm:general-test}. Even after retraining MLP layers, $\rob$ remains small (with log values reported here).
    }
    \label{tab:mlp_retrain}
\end{table}

\subsection{Fine-grained Forensics and Localized Testing}
\label{app:modelblockmatching}

As described in Section \ref{sec:localized-testing}, we can run $\rob$ on all pairs of Transformer blocks between two models (of different architecture), as long as they share the GLU structure. In addition to the Llama 3 results, we report results of matched blocks on the Sheared-LLaMa and Nvidia-Minitron models, which are both pruned from Llama models. 

In particular, we were able to identify the specific Transformer blocks of $\theta_{8B}= \texttt{Llama-3.1-8B}$ whose weights were likely used in initializing $\theta_{3B}=\texttt{Llama-3.2-3B}$ and $\theta_{1B}=\texttt{Llama-3.2-1B}$, as Meta reported that the \texttt{Llama-3.2-3B} and \texttt{Llama-3.2-1B} models were pruned from \texttt{Llama-3.1-8B} \citep{llamablog}. We use $\rob$ on all pairs of MLP blocks, where $(d_{\theta_{8B}},h_{\theta_{8B}},N_{\theta_{8B}})=(4096,14336,32)$,$(d_{\theta_{3B}},h_{\theta_{3B}},N_{\theta_{3B}})=(3072,8192,28)$, and $(d_{\theta_{1B}},h_{\theta_{1B}},N_{\theta_{1B}})=(2048,8192,16)$. We match blocks when the statistic $\rob^{(i,j)}$ from block $i$ of model 1 and block $j$ of model 2 is less than 1e-4, reported in Tables \ref{tab:llama3b} and \ref{tab:llama1b} (with the same for the other matchings in this section). 

\begin{table}[H]
    \centering
    \small 
    \begin{tabular}{c|c|c|c|c|c|c|c|c|c|c|c|c|c|c|c|c}
    \hline 
        $i$ & 1 & 2 & 3 & 4 & 5 & 6 & 7 & 8 & 9 & 10 & 11 & 12 & 13 & 14 & 15 & 16 \\ \hline
        $j:\rob^{(i,j)}(\theta_{8B},\theta_{3B})<1$e-4 & 1 & 2 & 3 & 4 & 5 & 6 & 7 & 8 & 9 & 10 & & 11 & 12 & 13 & 14 & 15 \\ \hline
        \multicolumn{17}{c}{} \\ \hline 
        $i$ & 17 & 18 & 19 & 20 & 21 & 22 & 23 & 24 & 25 & 26 & 27 & 28 & 29 & 30 & 31 & 32 \\ \hline
        $j:\rob^{(i,j)}(\theta_{8B},\theta_{3B})<1$e-4 &  & 16 & 17 & 18 & 19 & 20 &  & 21 & 22 & 23 & 24 & 25 &  & 26 & 27 & 28 \\ \hline 
    \end{tabular}
    \caption{We compute $\rob^{(i,j)}$ on all MLP blocks $i \in \{1,\dots,32 \}$ of $\theta_{8B}= \texttt{Llama-3.1-8B}$ and MLP blocks $j \in \{1,\dots, 28 \}$ of  $\theta_{3B}=\texttt{Llama-3.2-3B}$. We report pairs $(i,j)$ such that $\rob^{(i,j)} <1$e-4.  We find our test can identify the matching blocks even after pruning.}
    \label{tab:llama3b}
\end{table}

\begin{table}[H]
    \centering
    \small 
    \begin{tabular}{c|c|c|c|c|c|c|c|c|c|c|c|c|c|c|c|c}
    \hline 
        $i$ & 1 & 2 & 3 & 4 & 5 & 6 & 7 & 8 & 9 & 10 & 11 & 12 & 13 & 14 & 15 & 16 \\ \hline
        $j:\rob^{(i,j)}(\theta_{8B},\theta_{1B})<1$e-4 & 1 & 2 & 3 & 4 & 5 & 6 & & & 7 & & & 8 & & & 9 & \\ \hline 
        \multicolumn{17}{c}{} \\ \hline 
        $i$ & 17 & 18 & 19 & 20 & 21 & 22 & 23 & 24 & 25 & 26 & 27 & 28 & 29 & 30 & 31 & 32 \\ \hline
        $j:\rob^{(i,j)}(\theta_{8B},\theta_{1B})<1$e-4 & & 10 & & & 11 & & & & & & & & & & 15 & 16 \\ \hline 
    \end{tabular}
    \caption{We compute $\rob^{(i,j)}$ on all MLP blocks $i \in \{1,\dots,32 \}$ of $\theta_{8B}= \texttt{Llama-3.1-8B}$ and MLP blocks $j \in \{1,\dots, 16 \}$ of  $\theta_{1B}=\texttt{Llama-3.2-1B}$. We report pairs $(i,j)$ such that $\rob^{(i,j)} <1$e-4.}
    \label{tab:llama1b}
\end{table}

Next, we test \texttt{Sheared-LLaMa-2.7B}, with 32 Transformer blocks, hidden dimension 2560 and MLP dimension 6912. All 32 blocks align with the 32 blocks of Llama 2-7B, although both hidden and MLP dimensions have been reduced through pruning. 
\begin{table}[H]
    \centering
    \small 
    \begin{tabular}{c|c|c|c|c|c|c|c|c|c|c|c|c|c|c|c|c}
    \hline 
        $i$ & 1 & 2 & 3 & 4 & 5 & 6 & 7 & 8 & 9 & 10 & 11 & 12 & 13 & 14 & 15 & 16 \\ \hline
        $j:\rob^{(i,j)}(\theta_{1},\theta_{2})<1$e-90 & 1 & 2 & 3 & 4 & 5 & 6 & 7 & 8 & 9 & 10 & 11 & 12 & 13 & 14 & 15 & 16 \\ \hline 
        \multicolumn{17}{c}{} \\ \hline 
        $i$ & 17 & 18 & 19 & 20 & 21 & 22 & 23 & 24 & 25 & 26 & 27 & 28 & 29 & 30 & 31 & 32 \\ \hline
        $j:\rob^{(i,j)}(\theta_{1},\theta_{2})<1$e-90 &  17 & 18 & 19 & 20 &  21 & 22 & 23 & 24 & 25 & 26 & 27 & 28 & 29 & 30 & 31 & 32 \\ \hline 
    \end{tabular}
    \caption{We compute $\rob^{(i,j)}$ on all MLP blocks $i \in \{1,\dots, 32\}$ of $\theta_{1}= \texttt{Sheared-LLaMa-2.7B}$ and MLP blocks $j \in \{1,\dots, 32 \}$ of  $\theta_{2}=\texttt{Llama-2-7B}$. We report pairs $(i,j)$ such that $\rob^{(i,j)} <1$e-90.}
    \label{tab:shearedllama2-7b}
\end{table}

Next, we test \texttt{Sheared-LLaMa-1.3B}, with 24 Transformer blocks, hidden dimension 2048 and MLP dimension 5504.

\begin{table}[H]
    \centering
    \begin{tabular}{c|c|c|c|c|c|c|c|c|c|c|c|c}
    \hline 
        $i$ & 1 & 2 & 3 & 4 & 5 & 6 & 7 & 8 & 9 & 10 & 11 & 12 \\ \hline
        $j:\rob^{(i,j)}(\theta_{1},\theta_{2})<1$e-5 & 1 & 2 & 3 & 4 & 5 & 6 & 7 & 8 & 10 & 12 & & 16   \\ \hline 
        \multicolumn{13}{c}{} \\ \hline 
        $i$ & 13 & 14 & 15 & 16 & 17 & 18 & 19 & 20 & 21 & 22 & 23 & 24 \\ \hline
        $j:\rob^{(i,j)}(\theta_{1},\theta_{2})<1$e-5 & 17 & 18 & 19 & 20 & 21 & 22 & 25 & 27 & 28 & 29 & 31 & 32 \\ \hline 
    \end{tabular}
    \caption{We compute $\rob^{(i,j)}$ on all MLP blocks $i \in \{1,\dots, 24\}$ of $\theta_{1}= \texttt{Sheared-LLaMa-1.3B}$ and MLP blocks $j \in \{1,\dots, 32 \}$ of  $\theta_{2}=\texttt{Llama-2-7B}$. We report pairs $(i,j)$ such that $\rob^{(i,j)} <1$e-5.}
    \label{tab:shearedllama1-3b}
\end{table}

Finally, we compare Llama 3.1-8B with $\texttt{Llama-3.1-Minitron-4B-Depth-Base}$, a pruned model by reducing from 32 to 16 Transformer blocks and are able to identify the likely shared blocks. 
\begin{table}[H]
    \centering
    \begin{tabular}{c|c|c|c|c|c|c|c|c|c|c|c|c|c|c|c|c}
    \hline 
        $i$ & 1 & 2 & 3 & 4 & 5 & 6 & 7 & 8 & 9 & 10 & 11 & 12 & 13 & 14 & 15 & 16 \\ \hline
        $j:\rob^{(i,j)}(\theta_{1},\theta_{2})<1$e-90 & 1 & 2 & 3 & 4 & 5 & 6 & 7 & 8 & 9 & 10 & 11 & 12 & 13 & 14 & 15 & 32 \\ \hline 
    \end{tabular}
    \caption{We compute $\rob^{(i,j)}$ on all MLP blocks $i \in \{1,\dots, 32\}$ of $\theta_{1}= \texttt{Llama-3.1-Minitron-4B-Depth-Base}$ and MLP blocks $j \in \{1,\dots, 32 \}$ of  $\theta_{2}=\texttt{Llama-3.1-8B}$. We report pairs $(i,j)$ such that $\rob^{(i,j)} <1$e-90.}
    \label{tab:nvidia-minitron}
\end{table}

\section{Output-Preserving Transformations}
\label{app:robust}

An adversary could apply a particular rotation scheme by multiplying weight matrices by an orthogonal rotation matrix $U$ that will also preserve outputs. We describe such a transformation which breaks the invariants proposed by \cite{zeng2024humanreadablefingerprintlargelanguage} by manipulating layernorms. While this list may not be exhaustive, the following six transformations (with the first two described previously) ``camouflage'' the language model while preserving outputs: 
\begin{enumerate}[
        label={T\arabic*.},
        ref={T\arabic*.}]
    \item Permuting the rows of the embedding matrix (and subsequent matrices due to residual connections) by a permutation $\xi_{\text{emb}} \in \R^{d_{\text{emb}} \times d_{\text{emb}}}$
    \item Permuting the MLP matrices ($N$ different permutations for each Transformer block) by permutations $\xi_1, \dots, \xi_{N} \in \R^{d_{\text{mlp}} \times d_{\text{mlp}}}$
    \item Rotating the embedding matrix (and subsequent matrices due to residual connections) by an orthogonal rotation matrix $R_{\text{emb}} \in \R^{d_{\text{emb}} \times d_{\text{emb}}}$
    \item Rotating the query and key attention matrices ($N$ different rotations for each Transformer block) by orthogonal rotation matrices $R_1, \dots, R_{N} \in \R^{d_{\text{emb}} \times d_{\text{emb}}}$
    \item Replacing all layernorms (input, post-attention, final) with vectors in $\R^{1\times d_{\text{emb}}}$ with non-zero elements
    \item Scaling the MLP matrices by a constant non-zero factor
\end{enumerate}

Consider a model $\theta$ of Llama architecture (Appendix \ref{app:llamaarchitecture}). Consider orthogonal matrices $R_{\text{emb}}, R_1, \dots R_{32}$ as described, as well as new layernorms $\gamma'_{\text{input}, 1}, \dots, \gamma'_{\text{input}, 32}, \gamma'_{\text{post-attn}, 1}, \dots, \gamma'_{\text{post-attn}, 32}$ in $\R^{1 \times d_{\text{emb}}}$ with non-zero elements. 
Finally, consider non-zero constants $c_1, \dots, c_{32}$, which we use to transform the layernorms. We apply the rotation with these parameters to $\theta$, to get a new ``rotated'' model, Rot$(\theta)$. We generalize the set of transformations above as applying Rot$(\theta)$ to a model $\theta$. 

We transform all the original matrices of $\theta$ as in Table \ref{tab:rotation_scheme} (for $i = 1, \dots, 32$). 
Note that the transformations T1 and T2 are elements of the classes $\pi_\text{emb}$ and $\pi_\text{mlp}$, respectively, and the remaining transformations T3 to T6 are described in Table \ref{tab:rotation_scheme}. Importantly, T5 is the transformation that \cite{zeng2024humanreadablefingerprintlargelanguage}'s invariants are not robust to; our unconstrained setting test $\rob$ is robust to all 6 transformations, which we show in Table \ref{tab:huref_invariants}.

\subsection{Breaking HuREF Invariants}
\label{app:breakhuref}

Only transformations T3 and T5 is required to break the invariants from \cite{zeng2024humanreadablefingerprintlargelanguage}. Their first invariant is $M_a =  E (W_{Q,i})^T W_{K,i}) E^T$ at layer $i$, and for $M'$ with an embedding matrix rotation $R_\text{emb}$ where the layernorms $\gamma_{\text{input},i}$ are replaced with $\gamma'_{\text{input},i}$, we have the invariant is
\begin{align*}
    M'_a &= (E R_\text{emb}) \left( \text{diag}( \frac{1}{\gamma'_{\text{input}, i}}) R_\text{emb}^T  \text{diag}(\gamma_{\text{input}, i}) W_{Q,i}^T \right) \left( W_{K,i} \text{diag}(\gamma_{\text{input}, i})  R_\text{emb} \text{diag}( \frac{1}{\gamma'_{\text{input}, i}}) \right) (R_\text{emb}^T E^T) 
\end{align*}
and in general $M_a \neq M'_a$ unless the layernorm weights are equal constants. Similarly, we have the second invariant is 
\begin{align*}
    M'_b &= (E R_\text{emb}) \left( \text{diag}( \frac{1}{\gamma'_{\text{input}, i}}) R_\text{emb}^T  \text{diag}(\gamma_{\text{input}, i}) W_{V,i}^T \right) \left( W_{O,i} \text{diag}(\gamma_{\text{input}, i})  R_\text{emb} \text{diag}( \frac{1}{\gamma'_{\text{input}, i}}) \right) (R_\text{emb}^T E) \\ &\neq E W_{V,i}^T W_{O,i} E^T = M_b
\end{align*}
in general, also failing due to the layernorm and rotation $R_\text{emb}$ (note that our notation for Transformers is different than theirs). Finally, assuming for invariant $M_f$ that $W_1$ and $W_2$ are the gate and down projection matrices of an MLP (this is not stated explicitly in the paper but can be inferred from experiments), the remaining invariants do not hold either: 
\begin{align*}
    M'_f &= (E R_\text{emb}) \left( \text{diag}( \frac{1}{\gamma'_{\text{input}, i}}) R_\text{emb}^T  \text{diag}(\gamma_{\text{input}, i}) G_i^T \right) \left( D_i \text{diag}(\gamma_{\text{input}, i})  R_\text{emb} \text{diag}( \frac{1}{\gamma'_{\text{input}, i}}) \right) (R_\text{emb}^T E) \\ &\neq E G_i^T D_i E^T = M_f,
\end{align*}
so all their proposed invariants can be bypassed by these two transformations. 

Empirically, we we show this by computing all the invariants between \texttt{Llama-2-7b-hf} and independently trained models and between \texttt{Llama-2-7b-hf} and rotated finetuned models (including \texttt{Llama-2-7b-hf} itself) in Table \ref{tab:huref_invariants}. We can see there is little distinction between the independent vs. non-independent model pairs. 

\begin{table}[h]
    \centering
    \small
    \begin{tabular}{c c | c c c | c c c c}
    \hline 
        $\theta_1 = \texttt{Llama-2-7b-hf}, \theta_2=$ & Indep.? & $M_a$ & $M_b$ & $M_f$ & $\rob$ & $\csw$ & $\csh$ & $\jsd$ \\ \hline 
        \texttt{vicuna-7b-v1.5}& \ding{55} & 1.0 & 0.9883 & 0.9922 & $\varepsilon$ & $\varepsilon$ & $\varepsilon$ & -10.874 \\
        \texttt{Nous-Hermes-llama-2-7b}& \ding{55} & 1.0 & 1.0 & 1.0 & $\varepsilon$ & $\varepsilon$ & $\varepsilon$ & -12.101 \\ \hline 
         \texttt{llama-7b-hf} & \ding{51} & 0.0884 & 0.0250 & 0.0400 & 0.049 & 0.595 & 0.253 & -11.102 \\
         \texttt{AmberChat} & \ding{51} & 0.1289 & -0.0093 & 0.0198 & 0.941 & 0.460 & 0.279 & -10.281 \\
         \texttt{Openllama-v1} & \ding{51} & 0.1084 & 0.0076 & 0.0057 & 0.286 & 0.357 & 0.703 & -8.381 \\ \hline 
         Rotated \texttt{Llama-2-7b-hf} & \ding{55} & 0.0767 & 0.0908 & 0.1011  & $\varepsilon$ & 0.517 & 0.323 & $-\infty$\\ 
         Rotated \texttt{vicuna-7b-v1.5} & \ding{55} & 0.1553 & 0.0933 & 0.0977 & $\varepsilon$ & 0.688 & 0.857 & -10.874 \\
         Rotated \texttt{Nous-Hermes-llama-2-7b} & \ding{55} & 0.0332 & 0.0718 & 0.1060 & $\varepsilon$ & 0.772 & 0.240 & -12.101 \\ \hline 
    \end{tabular}
    \caption{Results for the three invariants $M_a, M_b, M_c$ from \cite{zeng2024humanreadablefingerprintlargelanguage} between \texttt{Llama-2-7b-hf} and independent and non-independent models.}
    \label{tab:huref_invariants}
\end{table}

Notably, we see that $\rob$ remains $\varepsilon$ even for rotated dependent model pairs, so our test is robust to these transformations. 

\begin{table}
    \centering
    \begin{tabular}{c|c|c}
    \hline 
        \textbf{Parameter name} & $\theta$ & $\text{Rot}(\theta)=\theta'$ \\ \hline
        embedding & $E$ & $ER_\text{emb}$ \\ 
        input layernorm & $\gamma_{\text{input, } i}$ & $\gamma'_{\text{input, } i}$ \\ 
        attention query matrix & \textbf{$W_{Q,i}$} & $R_i$ \textbf{$W_{Q,i}$} diag($\gamma_{\text{input, } i}$) $R_\text{emb}$ diag( $\frac{1}{\gamma'_{\text{input, } i}}$) \\ 
        attention key matrix & \textbf{$W_{K,i}$} & $R_i$ \textbf{$W_{K,i}$} diag($\gamma_{\text{input, } i}$) $R_\text{emb}$ diag( $\frac{1}{\gamma'_{\text{input, } i}}$) \\ 
        attention value matrix & \textbf{$W_{V,i}$} & \textbf{$W_{V,i}$} diag($\gamma_{\text{input, } i}$) $R_\text{emb}$ diag( $\frac{1}{\gamma'_{\text{input, } i}}$) \\ 
        attention output matrix & \textbf{$W_{O,i}$} & $R_\text{emb}^T$ \textbf{$W_{O,i}$} \\ 
        post-attention layernorm & $\gamma_{\text{post-attn, } i}$ & $\gamma'_{\text{post-attn, } i}$ \\
        MLP gate projection & $G_i$ & $G_i$ diag($\gamma_{\text{post-attn}, i}$) $R_\text{emb}$ diag( $\frac{1}{\gamma'_{\text{post-attn},i}}$) \\ 
        MLP up projection & $U_i$ & $c_i U_i$ diag($\gamma_{\text{post-attn}, i}$) $R_\text{emb}$ diag( $\frac{1}{\gamma'_{\text{post-attn},i}}$) \\ 
        MLP down projection & $D_i$ & $\frac{1}{c_i} $ $R_\text{emb}^T$ $D_i$ \\ 
        final layernorm & $\gamma_\text{final}$ & $\gamma'_\text{final}$ \\ 
        linear output & $O$ & $O$ diag($\gamma_\text{final}$) $R_\text{emb}$ diag($\frac{1}{\gamma'_\text{final}}$) \\ \hline 
    \end{tabular}
    \caption{We describe the output-preserving rotation applied to the parameters of a Llama-architecture model.}
    \label{tab:rotation_scheme}
\end{table}

\subsection{Invariance of Outputs under Rotation}

These transformations are particularly important because they preserve outputs as we show in Theorem \ref{thm:output-preserving}, and hence generally can go undetected, though $\rob$ is robust to them.

\begin{theorem}
\label{thm:output-preserving}
For any input sequence $X \in \{ 0, 1 \}^{n \times V}$, the outputs of models $\theta$ and $\text{Rot}(\theta)=\theta'$ are aligned, i.e. $\flm(X;\theta) = \flm(X;\theta')$.
\end{theorem}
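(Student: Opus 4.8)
The plan is to track how the rotation propagates through the residual stream and prove, by induction on the block index $i$, that running $\theta'=\txt{Rot}(\theta)$ produces the same residual-stream activations as $\theta$ up to right-multiplication by the orthogonal matrix $R_\txt{emb}$; i.e., that $\activation{\theta'}{i} = \activation{\theta}{i} R_\txt{emb}$ for every $i$. The base case follows from $E' = E R_\txt{emb}$, since $\activation{\theta'}{0} = X E' = (XE)\,R_\txt{emb} = \activation{\theta}{0} R_\txt{emb}$. For the final readout, once the invariant holds at layer $L$, the replaced final layernorm together with $O' = O\,\diag(\gamma_\txt{final})\,R_\txt{emb}\,\diag(1/\gamma'_\txt{final})$ cancels the trailing $R_\txt{emb}$: the factors $\diag(\gamma'_\txt{final})\,\diag(1/\gamma'_\txt{final})$ and $R_\txt{emb}R_\txt{emb}^\top$ collapse, so $\theta'$ and $\theta$ produce identical logits and hence $\flm(X;\theta')=\flm(X;\theta)$.

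The inductive step reduces to checking each sub-operation inside a Transformer block, using the invariant at layer $i-1$. (i) \emph{Normalization is rotation-invariant}: the (RMS-style) normalizer rescales each row of its input by a factor depending only on that row's Euclidean norm, which is unchanged under right-multiplication by the orthogonal $R_\txt{emb}$; so the normalized, pre-$\gamma$ activation of $\theta'$ equals that of $\theta$ times $R_\txt{emb}$, call it $u\,R_\txt{emb}$ with $u$ the corresponding quantity for $\theta$. (ii) \emph{The $\diag(\gamma)$-sandwiches collapse}: for each weight matrix $W\in\{W_{Q,i},W_{K,i},W_{V,i},G_i,U_i\}$ reading from the residual stream, the replacement $W\mapsto (\cdots)\,W\,\diag(\gamma)\,R_\txt{emb}\,\diag(1/\gamma')$ in Table~\ref{tab:rotation_scheme} (with $\gamma,\gamma'$ the relevant old/new layernorm vectors) is chosen so that, applied to $\theta'$'s input $u\,R_\txt{emb}\,\diag(\gamma')$, the factors $\diag(\gamma')\diag(1/\gamma')$ and $R_\txt{emb}R_\txt{emb}^\top$ cancel and one recovers exactly the quantity $u\,\diag(\gamma)\,W^\top$ computed by $\theta$ --- up to an output-side rotation ($R_i$ on queries and keys, none on values or the gate) or the scalar $c_i$ on the up-projection. (iii) \emph{The $R_i$ cancel in the attention score}: since $R_i$ multiplies both $W_{Q,i}$ and $W_{K,i}$, the score $QK^\top$ is unchanged, so the softmax attention weights match $\theta$'s; meanwhile $W_{O,i}\mapsto R_\txt{emb}^\top W_{O,i}$ re-introduces one factor $R_\txt{emb}$ on the attention output. (iv) \emph{The MLP scalar $c_i$ cancels}: the gate branch is unchanged, the up branch picks up $c_i$, and $D_i\mapsto \tfrac{1}{c_i} R_\txt{emb}^\top D_i$ removes $c_i$ while inserting one $R_\txt{emb}$. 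The residual connections are linear and so commute with right-multiplication by $R_\txt{emb}$; combining (i)--(iv) yields $\activation{\theta'}{i} = \activation{\theta}{i} R_\txt{emb}$, closing the induction.

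I expect step (iii) --- specifically, the compatibility of $R_i$ with the multi-head split and with the rotary position embeddings that Appendix~\ref{app:llamaarchitecture}'s forward pass suppresses --- to require the most care: for $QK^\top$ to be preserved head by head, $R_i$ must act block-diagonally across heads and commute with the per-head rotary embedding, so one takes $R_i$ built from per-head orthogonal blocks of the appropriate type. A secondary subtlety is in (i): the argument needs the normalizer to depend only on row norms (RMS normalization); for a layernorm with mean subtraction, invariance under a general orthogonal $R_\txt{emb}$ would fail unless $R_\txt{emb}$ is also constrained to fix the all-ones direction. Everything else is bookkeeping: expand each line of the forward pass for $\theta'$, substitute Table~\ref{tab:rotation_scheme}, and cancel.
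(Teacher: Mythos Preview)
Your proposal is correct and follows essentially the same approach as the paper: both arguments track the forward pass block by block and show that every residual-stream activation of $\theta'$ equals the corresponding activation of $\theta$ right-multiplied by $R_\txt{emb}$, with the layernorm replacements and weight transformations chosen precisely so that the intermediate $\diag$ and $R$ factors cancel. Your explicit induction framing and your flagging of the multi-head/RoPE and mean-subtraction subtleties are in fact more careful than the paper's own proof, which glosses over the head split and treats the normalizer's rotation invariance as immediate.
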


\begin{proof}

First, note that an element-wise product of two one-dimensional vectors is equivalent to multiplying by the diagonal matrix of the second vector, i.e. for $v, \gamma \in R^{1 \times m}$, 
\begin{equation*}
    v \ast \gamma = v \text{diag}(\gamma).
\end{equation*}
We use this in our layernorm calculations. 

Let the output from the unrotated embedding layer be $y = \fin(X, E)= EX$ (for $X \in \{ 0, 1 \}^{n \times V}$). Then the output from the rotated embedding layer is $ y' = \fin(X,E')=(ER_\text{emb})(x) = y R_\text{emb}$. Now consider Transformer block $i$ with input $y$ and the rotated Transformer block with input $y R_\text{emb}$. $y$ is passed into the input layernorm, which returns 
\begin{equation*}
    z = LN_i(y) = \frac{y}{\sqrt{\var(y) + \varepsilon}} \odot \gamma_{\text{input}, i} = \frac{y}{\sqrt{\var(y) + \varepsilon}} \text{diag}(\gamma_{\text{input}, i}). 
\end{equation*}
The rotated input layernorm on $y'$ returns

\begin{align*}
    z' = LN'_i(y') &= \frac{y'}{\sqrt{\var(y') + \varepsilon}} \odot \gamma'_{\text{input}, i}  = \frac{y R_\text{emb}}{\sqrt{\var(y R_\text{emb}) + \varepsilon}} \odot \gamma'_{\text{input}, i} \\ &= \frac{y }{\sqrt{\var(y) + \varepsilon}} R_\text{emb} \text{diag}(\gamma'_{\text{input}, i}) = z \text{ diag}( \frac{1}{\gamma_{\text{input}, i}}) R_\text{emb} \text{diag}(\gamma'_{\text{input}, i}), 
\end{align*}
which follows from $R_\text{emb}$ being orthogonal. Then we have the output from the unrotated self-attention is 
\begin{align*}
    w = \text{softmax} \left( \frac{zW_{Q,i}^T (zW_{K,i}^T)^T}{\sqrt{d_\text{key}}} \right) zW_{V,i}^T W_{O,i}^T,
\end{align*}
and the output from the rotated self-attention with input $z'$ is 
\begin{align*}
    &\text{softmax} \left( \frac{z' (R_iW_{Q,i} \text{diag}(\gamma_{\text{input, } i}) R_\text{emb} \text{diag}( \frac{1}{\gamma'_{\text{input, } i}}) )^T (z' (R_i W_{K,i} \text{diag}(\gamma_{\text{input, } i}) R_\text{emb} \text{diag}( \frac{1}{\gamma'_{\text{input, } i}}))^T )^T }{\sqrt{d_\text{key}}} \right) \\ &z' (W_{V,i} \text{diag}(\gamma_{\text{input, } i}) R_\text{emb} \text{diag}( \frac{1}{\gamma'_{\text{input, } i}}))^T (R_\text{emb}^T W_{O,i})^T \\ &= 
    \text{softmax} \left( \frac{z' \text{diag}( \frac{1}{\gamma'_{\text{input, } i}}) R_\text{emb}^T \text{diag}(\gamma_{\text{input, } i})  W_{Q,i}^T  R_i^T (z' \text{diag}( \frac{1}{\gamma'_{\text{input, } i}}) R_\text{emb}^T  \text{diag}(\gamma_{\text{input, } i})   W_{K,i}^T R_i^T )^T }{\sqrt{d_\text{key}}} \right) \\ &z' \text{diag}( \frac{1}{\gamma'_{\text{input, } i}}) R_\text{emb}^T \text{diag}(\gamma_{\text{input, } i})  W_{V,i}^T  W_{O,i}^T R_\text{emb} \\ 
    &=
    \text{softmax} \left( \frac{z' \text{diag}( \frac{1}{\gamma'_{\text{input, } i}}) R_\text{emb}^T \text{diag}(\gamma_{\text{input, } i})   W_{Q,i}^T W_{K,i} \text{diag}(\gamma_{\text{input, } i}) R_\text{emb} \text{diag}( \frac{1}{\gamma'_{\text{input, } i}})  (z')^T }{\sqrt{d_\text{key}}}\right) z W^T_{V,i} W^T_{O,i} R_\text{emb} \\ 
    &= \text{softmax} \left( \frac{z W_{Q,i} W^T_{K,i}  z^T}{\sqrt{d_\text{key}}} \right) z W^T_{V,i} W^T_{O,i} R_\text{emb} 
    \\ &= w R_\text{emb} = w'. 
\end{align*}
Then $y$ and $y'$ respectively from before the layernorm are added as residual connections as $v = y+w$ and $v' = y'+w' = v R_\text{emb}$. $v$ is passed into the post-attention layernorm, which returns 
\begin{equation*}
    u = LN_i(v) = \frac{v}{\sqrt{\var(v) + \varepsilon}} \odot \gamma_{\text{post-attn}, i} = \frac{v}{\sqrt{\var(v) + \varepsilon}} \text{diag}(\gamma_{\text{post-attn}, i}). 
\end{equation*}
Similar to the input layernorm, the rotated post-attention layernorm on $v'$ returns 
\begin{align*}
    u' = LN'_i(v') &= \frac{v'}{\sqrt{\var(v') + \varepsilon}} \odot \gamma'_{\text{post-attn}, i}  = \frac{v R_\text{emb}}{\sqrt{\var(v R_\text{emb}) + \varepsilon}} \odot \gamma'_{\text{post-attn}, i} \\ &= \frac{v}{\sqrt{\var(v) + \varepsilon}} R_\text{emb} \text{diag}(\gamma'_{\text{post-attn}, i}) = u \text{ diag}( \frac{1}{\gamma_{\text{post-attn}, i}}) R_\text{emb} \text{diag}(\gamma'_{\text{post-attn}, i}).
\end{align*}

Then the output from the unrotated MLP layer on $u$ is 
\begin{align*}
    t &= [ \sigma(u G_i^T) \odot (u U_i^T) ] D_i^T
\end{align*}
and the output from the rotated MLP on $u'$ is 
\begin{align*}
    t' &= [ \sigma(u' (G_i \text{diag}(\gamma_{\text{post-attn}, i}) R_\text{emb} \text{diag}( \frac{1}{\gamma'_{\text{post-attn},i}}))^T \odot (u' (c_i U_i \text{diag}(\gamma_{\text{post-attn}, i}) R_\text{emb} \text{diag}( \frac{1}{\gamma'_{\text{post-attn},i}}))^T) ] (\frac{1}{c_i} R_\text{emb}^T D_i)^T \\ 
    &= [ \sigma(u \text{ diag}( \frac{1}{\gamma_{\text{post-attn}, i}}) R_\text{emb} \text{diag}(\gamma'_{\text{post-attn}, i}) \text{diag}( \frac{1}{\gamma'_{\text{post-attn},i}})  R_\text{emb}^T  \text{diag}(\gamma_{\text{post-attn}, i}) G_i^T) \odot \\ & (c_i u \text{ diag}( \frac{1}{\gamma_{\text{post-attn}, i}}) R_\text{emb} \text{diag}(\gamma'_{\text{post-attn}, i}) \text{diag}( \frac{1}{\gamma'_{\text{post-attn},i}})  R_\text{emb}^T  \text{diag}(\gamma_{\text{post-attn}, i})) U_i^T ] \frac{1}{c_i} D_i^T R_\text{emb} \\ 
    &= [ c_i \sigma(u G_i^T) \odot (u U_i^T) ] \frac{1}{c_i} D_i^T R_\text{emb} = t R_\text{emb}. 
\end{align*}
Then the output from the self-attention is added as a residual connection, and the final output from the unrotated Transformer block is $s = t+v$, and the output from the rotated Transformer block is $s' = t' + v' = s R_\text{emb}$. 

Suppose $a$ is the output after all Transformer layers in $\theta$ and $a'$ is the output after all Transformer layers in $\theta'$. Then the outputs after the final layernorms are
\begin{equation*}
    b = \frac{v}{\sqrt{\var(a) + \varepsilon}} \text{diag}(\gamma_\text{final})
\end{equation*}
\begin{equation*}
    b' = b \text{ diag}( \frac{1}{\gamma_{\text{final}}})  R_\text{emb} \text{diag}(\gamma'_{\text{final}}), 
\end{equation*}
and the logits from the linear output layer are 
\begin{align*}
    b O^T &= b \text{ diag}( \frac{1}{\gamma_{\text{final}}})  R_\text{emb} \text{diag}(\gamma'_{\text{final}}) \text{diag}(\gamma_\text{final})R_\text{emb}^T  \text{diag}( \frac{1}{\gamma'_\text{final}}) O^T \\ &= b' (O')^T,
\end{align*}
which are the same for both models.
\end{proof}

We attempted to undo such a transformation that an adversary may apply by solving the least squares problem: We solve for a rotation $A$ that minimizes $\vert AX - Y \vert $ where $X$ is a weight matrix of the first model and $Y$ is the corresponding weight matrix of the second model. Although this will provide a potential rotation to undo this transformation, we find that this solution will also find a matrix $A$ that aligns two independent model pairs as well. This makes undo-ing the rotation this way unreliable. 
The same holds for $X$ and $Y$ that are activations over multiple inputs. 

\end{document}